\DeclareMathOperator*{\argmax}{argmax}
\newtheorem{lemma}{Proposition}
\renewcommand{\paragraph}{ \@startsection{paragraph}{4} {\z@}{1.25ex \@plus 1ex \@minus .2ex}{-1em} {\normalfont\normalsize\bfseries} }
\newcommand\blfootnote[1]{%
  \begingroup
  \renewcommand\thefootnote{}\footnote{#1}%
  \addtocounter{footnote}{-1}%
  \endgroup
}
\ifcvprfinal\pagestyle{empty}\fi
\begin{document}

\title{Cluster-Wise Ratio Tests for Fast Camera Localization}
\author{Ra{\'u}l D{\'i}az, Charless C. Fowlkes\\
Computer Science Department, University of California, Irvine\\
{\tt\small \{rdiazgar,fowlkes\}@uci.edu}
}

\maketitle

\begin{abstract}

Feature point matching for camera localization suffers from scalability
problems.  Even when feature descriptors associated with 3D scene points are
locally unique, as coverage grows, similar or repeated features become
increasingly common. As a result, the standard distance ratio-test used to
identify reliable image feature points is overly restrictive and rejects many
good candidate matches. We propose a simple coarse-to-fine strategy that uses
conservative approximations to robust local ratio-tests that can be computed
efficiently using global approximate k-nearest neighbor search. We treat these
forward matches as votes in camera pose space and use them to prioritize
back-matching within candidate camera pose clusters, exploiting feature
co-visibility captured by the 3D model camera pose graph.  This approach
achieves state-of-the-art camera pose estimation results on a variety of
benchmarks, outperforming several methods that use more complicated data
structures and that make more restrictive assumptions on camera pose. We 
carry out diagnostic analyses on a difficult test dataset containing globally
repetitive structure which suggest our approach successfully adapts to the
challenges of large-scale pose estimation.

\blfootnote{This work is supported by NSF grants IIS-1618806 and IIS-1253538.}
\end{abstract}


\vspace{-0.5cm} 
\section{Introduction}

In this paper we consider the problem of estimating the full 6DOF camera pose
of a query image with respect to a large-scale 3D model such as those obtained
from a Structure-from-Motion (SfM) pipeline
\cite{Snavely2010,wu2013towards,moulon2013adaptive,schoenberger2016sfm}.  A
typical approach is to detect distinctive 2D feature points in a query image
and perform correspondence search against feature descriptors associated with
3D points obtained from the SfM reconstruction. This initial matching is
performed in descriptor space (e.g., SIFT \cite{Lowe2004} or SURF
\cite{bay2008speeded}) using an approximate k-nearest neighbor search
implementation \cite{ann, flann}.  Candidate 2D-3D correspondences are then
further filtered using robust fitting techniques (e.g., RANSAC variants
\cite{ransac, mlesac, acransac}) to identify inliers and the final camera pose
estimated using an algebraic PnP solver and non-linear refinement. Camera
pose estimation is a fundamental building block in many computer vision algorithms
(e.g., incremental bundle adjustment), can provide strong constraints on object
recognition (see e.g., \cite{Wang_2015_CVPR,mvbs_iccv}), and is useful in robotics
applications such as autonomous driving and navigation.

Unfortunately, the performance of standard camera localization pipelines degrades
as the size of the 3D model grows.  Finding good correspondences becomes 
difficult in the large-scale setting due to two factors. First, standard 2D-to-3D 
\textit{forward} matching is likely to accept bad correspondences of a query feature 
with the model since the feature space becomes cluttered with similar descriptors 
from completely different locations. Standard heuristics for identifying
distinctive matches, such as the distance ratio-test of Lowe \cite{Lowe2004},
which compares the distance to the nearest-neighbor point descriptor with that
of the second-nearest neighbor, fail due to proximity of other model feature
descriptors.  Second, increasingly noisy correspondences obtained from the
matching stage drives up the runtime of the robust pose estimation step, whose
complexity typically grows exponentially with the number of outliers. These
difficulties are particularly evident in large urban environments, where
repeated structure is common and local features become less distinctive
\cite{Torii_2013,arandjelovic2014}.

\begin{figure*}[ht]
  \centering
  \includegraphics[width=0.7\paperwidth]{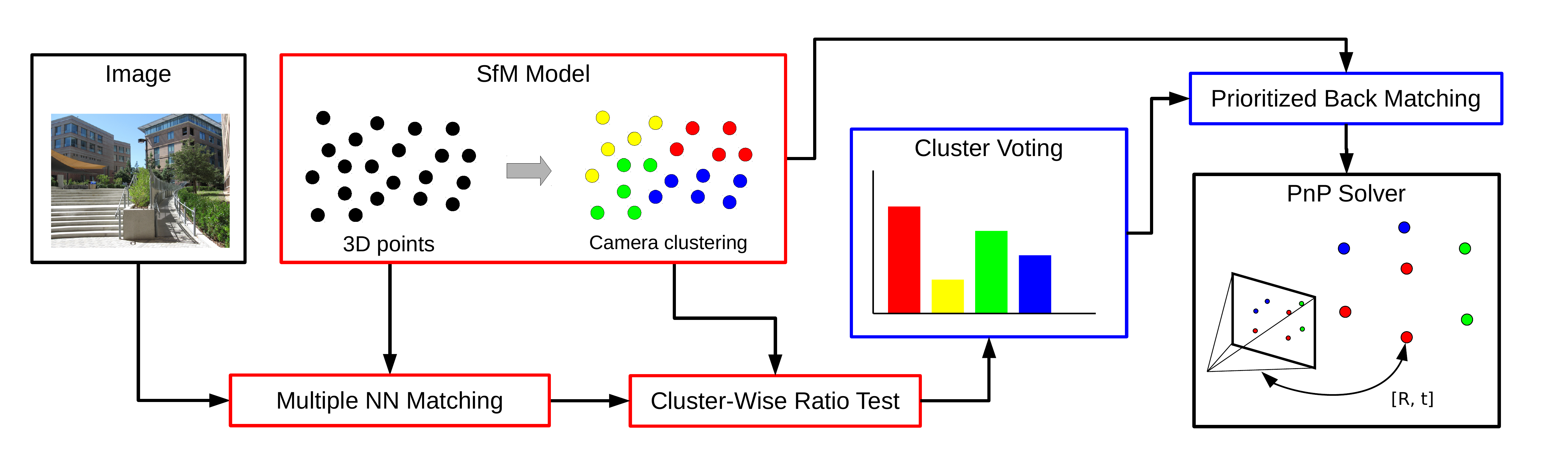} 
  \caption[Large Scale Camera Localization Pipeline]{Overview of our camera
  pose-estimation pipeline. We first exploit multiple nearest neighbor search and
  camera pose clustering to identify candidate feature correspondences
  (red boxes, described in Section \ref{sec:fwd-pass}).  We then utilize 
  co-visibility to expand this set and prioritize back-matching of model
  features (blue boxes, described in Section \ref{sec:bck-pass}).
}
  \label{fig:loc-pipe}
\end{figure*}

\paragraph{Related Work:}
These problems are well known and have been approached in several ways in the
literature. Works such as \cite{li2010location,Li} focus on generating a
simplified 3D model that contains only a representative subset of distinctive
model points. With a smaller model and prioritized search, it becomes possible
to replace the traditional approach of 2D-to-3D \textit{forward} matching, with
3D-to-2D \textit{back} matching, allowing the ratio test to be performed in the
sparser feature space of the query image. 

 
An alternative to removing points from the model is to cluster and quantize
model point feature descriptors. \cite{Sattler2011} use vocabulary trees to
speed up forward matching by assigning each model point and each query feature
to a vocabulary word, yielding faster runtimes since the vocabulary size is
generally smaller than the model point cloud.  A linear search for the first
and second nearest neighbors is performed within each word bin, and a ratio
test filters out non-distinct correspondences.  \cite{Sattler2012} use active
search in the vocabulary tree to prioritize back matching of 3D points close to
those that have already been found and terminate early as soon as a sufficient
number of matches have been identified.

A very different approach is taken in the works of \cite{zeisl2015camera,
svarm2014accurate}. Camera localization is framed as a Hough voting procedure,
where the geometric properties of SIFT (scale and orientation) 
provide approximate information about likely camera pose from individual
point correspondences. By using focal length and camera orientation priors,
each 2D-to-3D match casts a vote into the intersection of a visibility cone and
a hypothesized ground-plane. Orientation and model co-visibility are further
used to filter out unlikely matches, rapidly identifying the potential camera
locations. 

\paragraph{Our Contribution:}
Inspired by this prior work, we propose a fast, simple method for camera
localization that scales well to large models with globally repeated structure.
Our approach avoids complicated data structures and makes no hard {\em a
priori} assumptions on camera pose (e.g., gravity direction of the camera). Our
basic insight is to utilize a coarse-to-fine approach that rapidly narrows down
the region of camera pose space associated with the query image. Specifically,
we formulate a linear time voting process over camera pose space by assigning
each single model view to an individual camera pose bin. 
This voting allows us to identify model views likely to overlap the query
image and to prioritize back matching of those views against it
while exploiting co-visibility constraints and local ratio testing.

Figure \ref{fig:loc-pipe} gives on overview of our pipeline.  Our first
contribution (Section \ref{sec:fwd-pass}) is to introduce and analyze two
ratio-tests that can be used to find distinctive matches in a pool of
candidates produced by global k-nearest neighbor search (kNN).  Our second
contribution (Section \ref{sec:bck-pass}) uses these forward matches as votes
to prioritize back matching of model images against the query image. Extensive
experimental evaluation (Section \ref{sec:results}) suggests this approach
scales well and outperforms existing methods on several pose-estimation
benchmarks.


\section{Ratio Tests for Global Matching} \label{sec:fwd-pass}

Forward-matching of query image points against a model is effective when
the model is small. In such models, approximate nearest-neighbors are 
often true correspondences and ratio-testing is effective at discarding
bad matches.  In this section we first establish that clustering the 
model into smaller sub-models and performing forward-matching within each
cluster is sufficient to achieve good performance for large models (Section
\ref{sec:exhaustive}).  We then describe how to efficiently approximate
exhaustive cluster-wise matching by global forward-matching using
approximations to the local ratio test (Section \ref{sec:ratio-tests}) followed
by back-matching.

\subsection{Clustering and Exhaustive Local Matching} \label{sec:exhaustive}

A naive approach to solving camera localization at large scale is to simply
divide the 3D model into small pieces (clusters) and perform matching and
robust PnP pose estimation for each cluster.  This avoids the problems of
global feature repetition and difficulties of high density in the feature
space.  However, this is infeasible from a computational point of view as it
requires building a nearest-neighbor data structure for each cluster and
matching to each cluster separately at test time.  Consider a kd-tree, where
searching for a match in a set with $N$ descriptors is logarithmic in the set
size: $O(log(N))$.  If we divide the model into $|\mathcal{C}|=N/S$ clusters of
constant size $S$, execution time is dominated by the number of clusters which
grows linearly in the model size, $O(|\mathcal{C}|
log(\frac{N}{|\mathcal{C}|})) = O(\frac{N}{S} log(S)) > O(log(N))$.  While not
practical at scale, we take this {\em exhaustive local matching} approach as a
gold-standard baseline for evaluating our coarse-to-fine approach.

\paragraph{Exhaustive Local Matching is effective but slow:}

To evaluate clustering and local matching, we use the Eng-Quad dataset from
\cite{tsu-gis}, and build two SfM models using COLMAP
\cite{schoenberger2016sfm}.  The first model contains only the training image
set, while a second model bundles both the training and test images and is used
for evaluating localization accuracy.  We geo-register the resulting
reconstructions with a GIS model so that the scale of the SfM model is
approximately metric. 5129 training images of the 6402 were bundled, and 520
out of 570 test images were additionally bundled in the test model.  The
resulting point cloud has 579,859 3D points and 2,901,885 feature descriptors.
We refer to these descriptors as \textit{views} of the point. 

To generate clusterings of the model, we construct a scene matrix $S$ whose
$(i,j)$ entry contains the number of points that image pair $I_{i},I_{j}$ share
in the SfM model. We performed spectral clustering \cite{shi2000normalized} on
the scene matrix using the 50 largest eigenvectors and produce three different
granularities: no clustering at all (purely global), 50 clusters, and 500
clusters. To evaluate exhaustive local matching, we matched a query image
against every cluster and select the one that produces the smallest
localization error.  For matching to a cluster, we use FLANN \cite{flann} to
find the first and second NN of each query point and apply a standard ratio
test with a $\tau=0.7$ threshold.  We ran RANSAC on each set of candidate
cluster correspondences using a P3P solver \cite{kneip2011novel} and a focal
length prior based on the image EXIF metadata.  Similar to
\cite{li2010location}, an image is considered to be successfully matched if it
has at least $12$ inlier correspondences with a reprojection error less than
$\epsilon=6px$. 

Table \ref{tab:gs-table} shows that exhaustive local matching within each
cluster performs much better than global matching, with lower median error and
fewer failures.  However, the execution time grows roughly linearly with
respect to the number of clusters, motivating our coarse-to-fine strategy.

\begin{table}[t]
\begin{center}
{\scriptsize
\setlength{\tabcolsep}{3pt}
\begin{tabular}{|c||c|c|c|c||c|c|c|}
\hline
\#clusters & \#images & \#inliers & ratio & error [m] & fwd [s] & RNSC [s]& total [s] \\ \hline \hline
1 (global) & 463      & 94        & 0.57  & 0.64   & 0.833   & 0.129   & 0.962    \\ \hline 
50         & 512      & 66        & 0.54  & 0.45   & 13.10   & 43.62   & 56.822   \\ \hline
500        & 517      & 51        & 0.49  & 0.29   & 80.23   & 523.69  & 603.52   \\ \hline 
\end{tabular}
}
\end{center}
   \caption[Gold Standard Localization using Model Clustering]{Results on Eng-Quad using 
   a standard localization framework applied to model clusters. Performing localization 
   separately in each cluster improves the number of localized cameras and the median error 
   accuracy, at the expense of longer runtimes due to exhaustive matching.
   We also report the number of inliers and inlier ratio, as well as forward matching,
   RANSAC, and total times.}
\label{tab:gs-table}
\end{table}

\floatplacement{algorithm}{t}
\begin{algorithm}
\small
    \caption{Global Forward Matching}
    \begin{algorithmic}
        \State {\bf INPUT:} Query features $Q$, Model features $\mathcal{V}$, 
        NN search depth $k$, ratio test threshold $\tau$, match count threshold $N_{F}$
         
        \State $\mathcal{M} = \emptyset$
        \While{$|\mathcal{M}| < N_{F}$}
        \State $q = $ random sample from $Q$
            \State $\{v_{1}, ..., v_{k+1}\} = $ kNN($q,\mathcal{V},k+1$)
            \State $\alpha_{q} = \frac{\lVert q-v_{1} \rVert}{\lVert q-v_{k+1} \rVert}$
            \If{$\alpha_{q} \leq \tau$}
                \State $\mathcal{M} = \mathcal{M} \cup \{(q,v_{1}), ..., (q,v_{k}) \} $
            \EndIf
        \EndWhile
        \State \textbf{return} $\mathcal{M}$
    \end{algorithmic}
    \label{alg:propreg}
\end{algorithm}

\begin{figure}
  \centering
  \includegraphics[width=0.9\linewidth]{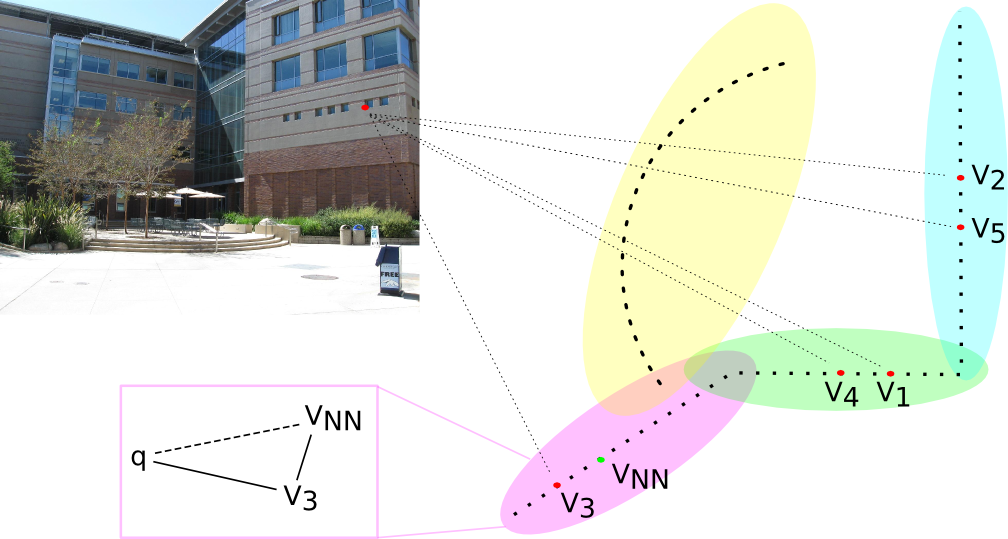}
  \caption[Cluster-wise ratio test]{Cluster-wise ratio testing.
  Model views are divided into clusters. For a query feature $q$ in the image
  (red dot), we search up to 5 nearest neighbors. Within clusters containing
  two or more matches, we can perform a standard local 1-ratio test within the
  cluster (e.g. $v_1$ and $v_4$ are the first and second NN in the green
  cluster).  For the singleton $v_3$ in the pink cluster, we use an alternate
  \textit{t-ratio} test  (Eq. \ref{eqn:trt}) based on the matched view's nearest
  neighbor $v_{NN}$ (rather than the query point's true second nearest neighbor
  within the cluster).}
  \label{fig:fwd_rt}
\end{figure}

\subsection{Local Ratio Tests for Global Matches} \label{sec:ratio-tests}

\textit{How can we get the benefits of local cluster-wise matching while
maintaining the computational cost associated with a single global
nearest-neighbor search?} Cluster-wise matching considers a nearest-neighbor
per-cluster for each query point.  To try and recover this larger pool of
candidate correspondences using global search, we propose to retrieve the global
top k nearest-neighbors for each query point.  Fortunately, approximate kNN
searches are not substantially more costly since those points typically live in
adjacent leaves of the kd-tree (which must be explored even for a 1-NN
retrieval). A larger set of candidate matches can address the problem of
repeated structure by retrieving the set of multiple scene points that might
correspond to a query point. However, it also results in a k-fold increase in
outliers which we now address.

We define a \textit{view} $v \in \mathcal{V}$ as the 2D point observation of a
3D point $p \in \mathcal{P}$ in a particular model image $I \in \mathcal{I}$.
Given a camera pose clustering $\mathcal{C}$ of the SfM model images, we assign
the view descriptors of each image to their corresponding cluster $c \in
\mathcal{C}$. Note that these clusters divide images in disjoint groups, but
they do share common points, as a 3D point can have multiple views belonging to
images assigned to different clusters.  For a query image $I$ with query
features $Q$, we search for $k$ approximate nearest neighbors using a global
kd-tree structure built from all views $\mathcal{V}$. 



\paragraph{Global k-ratio tests:}
We start with a conservative global ratio-test (Algorithm \ref{alg:propreg})
to prune candidate matches by comparing the distance ratio of the first and
$k+1$ nearest neighbor retrieved, as proposed by \cite{zamir2014image}. If the
ratio is greater than threshold $\tau$, we drop the query point.  Otherwise,
all $k$ nearest neighbors pairs $\{(q,v_{1}), ...,(q,v_{k})\}$ are included in
the set of putative correspondences $\mathcal{M}$.  This \textit{global} ratio
test is much more conservative than the standard first vs second NN test. In
the remainder of this paper, we will refer to this global test as
\textit{k-ratio}, defined formally as $\frac{\lVert q-v_{1} \rVert}{\lVert
q-v_{k+1} \rVert} \leq \tau$.  The standard first versus second NN test will be
referred as \textit{1-ratio}. 

\begin{lemma}
If a candidate match fails the global k-ratio test, it also fails the local
1-ratio test.
\end{lemma}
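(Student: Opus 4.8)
The plan is to compare the two ratios directly and argue that the global $k$-ratio is never larger than the local $1$-ratio; since each test rejects a match exactly when its ratio exceeds $\tau$, this is what the statement asks for. Write the $k+1$ retrieved neighbors of $q$ in nondecreasing order of distance, $\|q-v_1\|\le\cdots\le\|q-v_{k+1}\|$, and suppose the match fails the global test, i.e. $\|q-v_1\|>\tau\,\|q-v_{k+1}\|$. Let $c$ be the cluster in which the local $1$-ratio test would be performed for this match, and let $u_1,u_2\in c$ be the nearest and second-nearest views of $q$ within $c$, so the local test is $\|q-u_1\|/\|q-u_2\|\le\tau$.

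First I would observe that $u_1=v_1$: since $v_1$ is the globally nearest view, it is in particular the nearest view to $q$ among those in its own cluster $c$. (If instead the match is some $(q,v_i)$ that is the cluster-wise nearest neighbor of $c$, the same steps go through with $\|q-u_1\|=\|q-v_i\|\ge\|q-v_1\|$.) Next, a local $1$-ratio test is only carried out when $c$ contains at least two of the $k+1$ retrieved neighbors — otherwise $c$ is a singleton among them and the \emph{t-ratio} test of Fig.~\ref{fig:fwd_rt} is used instead — so $c$ contains some $v_j$, $2\le j\le k+1$, besides $v_1$, whence $\|q-u_2\|\le\|q-v_j\|\le\|q-v_{k+1}\|$. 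Combining,
$$\frac{\|q-u_1\|}{\|q-u_2\|}\;=\;\frac{\|q-v_1\|}{\|q-u_2\|}\;\ge\;\frac{\|q-v_1\|}{\|q-v_{k+1}\|}\;>\;\tau ,$$
so the local $1$-ratio test fails as well.

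The one step that deserves care is the bound $\|q-u_2\|\le\|q-v_{k+1}\|$: the local test compares against the true second-nearest view \emph{inside} $c$, and the global $k$-NN search pins this distance down only when $c$ actually holds two or more of the $k+1$ retrieved neighbors. This, however, is precisely the regime in which a local $1$-ratio test (rather than the $t$-ratio fallback) is the test being applied, so nothing is lost. Everything else is the monotonicity remark that replacing the in-cluster second-neighbor distance by the larger quantity $\|q-v_{k+1}\|$ can only decrease the ratio: the $k$-ratio is the more lenient of the two tests and therefore rejects a subset of what the local $1$-ratio rejects.
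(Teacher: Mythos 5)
Your proof is correct and follows essentially the same route as the paper's: both establish the chain of inequalities showing that the local $1$-ratio is bounded below by the global $k$-ratio (by bounding the in-cluster second-neighbor distance above by $\lVert q-v_{k+1}\rVert$ and the in-cluster first-neighbor distance below by $\lVert q-v_{1}\rVert$), so failure of the more lenient global test forces failure of the local one. Your added care about when the local test is actually applied (two or more retrieved neighbors in the cluster) makes explicit an assumption the paper leaves implicit, but the argument is the same.
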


\begin{proof}
Let $\{v_{c_1},v_{c_2}\} \subset \{v_{1}, ..., v_{k}\}$ be the first and second
\textit{local} nearest neighbors of a particular query feature $q$. Since the global set
$\{v_{1}, ..., v_{k}\}$ is sorted by ascending distance, this implies that $\lVert
q-v_{c_2} \rVert \leq \lVert q-v_{k+1} \rVert$, and $\lVert q-v_{c_1} \rVert
\geq \lVert q-v_{1} \rVert$. Formally,
\begin{equation}
   \frac{\lVert q-v_{c_1} \rVert}{\lVert q-v_{c_2} \rVert} \geq
   \frac{\lVert q-v_{c_1} \rVert}{\lVert q-v_{k+1} \rVert} \geq 
   \frac{\lVert q-v_{1} \rVert}{\lVert q-v_{k+1} \rVert}
\end{equation}

Hence, the local 1-ratio will always be equal or greater than the global
k-ratio. This guarantees that any correspondence rejected by the k-ratio test
would also have failed the local 1-ratio test. A correspondence passing the
k-ratio test might not pass the local 1-ratio test, so the local 1-ratio 
test is a more stringent criteria.
\end{proof}

\paragraph{Cluster-wise ratio tests:}
After the initial global filtering, we would like to perform local ratio
testing within each cluster. When more than two candidate matches for a query
point belong to the same cluster, we can simply \textit{re-rank} them and apply
a standard 1-ratio test.  For example, suppose two global matches $(q,v_{2})$
and $(q,v_{4})$ which are the second and fourth global NN of the query feature
$q$ fall into the same cluster.  If $v_{2}$ and $v_{4}$ are views of distinct 3D
points, then they are necessarily the first $(q,v_{c_1})$ and second
$(q,v_{c_2})$ local nearest-neighbors of $q$ in that cluster (see Figure
\ref{fig:fwd_rt}). Any lower-ranked matches within the cluster can be ignored
and the 1-ratio test applied to this pair.

When only a single global match falls within a cluster we can no longer perform
an exact local 1-ratio test since we do not have immediate access to the 2nd
nearest neighbor within that cluster.  Instead we develop a bound based on the
triangle inequality to define an alternate test for such cases which we refer
to as the \textit{t-ratio} test. 

Given a local correspondence $v_{c_1} \in c$, we define $v_{NN} =
kNN(v_{c_1},c,1)$ as the nearest neighbor of view  $v_{c_1}$ in the feature
space defined by cluster $c$. Since $v_{NN}$ is obtained purely from training
data, we can pre-compute it offline and access it at test time.  We define the
\textit{t-ratio} test as:
\begin{equation}
   \frac{\lVert q-v_{c_1} \rVert}{\lVert q-v_{c_1} \rVert + \lVert v_{c_1} - v_{NN} \rVert} \leq \tau
   \label{eqn:trt}
\end{equation}
Although we missed the local 2nd nearest neighbor in the global search, the
distance $\lVert v_{c_1} - v_{NN} \rVert$ provides useful information on how
far away the 2nd nearest neighbor might be. 

\begin{lemma}
If a candidate match fails the t-ratio test, it also fails the local 1-ratio test.
\end{lemma}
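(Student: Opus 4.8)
The plan is to show that the quantity appearing in the t-ratio test is never larger than the quantity appearing in the local 1-ratio test; failing the former then forces failure of the latter, exactly as in Proposition 1. Let $v_{c_2}$ denote the local second nearest neighbor of $q$ within cluster $c$ (well defined whenever $|c|\geq 2$, which is the only case in which either test is meaningful). The t-ratio and the 1-ratio are fractions with the same nonnegative numerator $\lVert q-v_{c_1}\rVert$, so it suffices to compare their denominators, i.e. to prove
\[
   \lVert q-v_{c_2} \rVert \;\leq\; \lVert q-v_{c_1} \rVert + \lVert v_{c_1} - v_{NN} \rVert .
\]

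I would establish this bound in two elementary steps. First, the triangle inequality applied to the points $q$, $v_{c_1}$, $v_{NN}$ gives $\lVert q-v_{NN}\rVert \leq \lVert q-v_{c_1}\rVert + \lVert v_{c_1}-v_{NN}\rVert$, which is precisely the right-hand side above. Second, $v_{NN}=kNN(v_{c_1},c,1)$ is a view belonging to cluster $c$ and is distinct from $v_{c_1}$ (nearest-neighbor search excludes the query view itself), so $v_{NN}$ is one of the candidates over which the local second nearest neighbor of $q$ is chosen; since $v_{c_2}$ is by definition the closest such candidate to $q$, we get $\lVert q-v_{c_2}\rVert \leq \lVert q-v_{NN}\rVert$. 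Chaining the two inequalities yields the displayed denominator bound, hence $\frac{\lVert q-v_{c_1}\rVert}{\lVert q-v_{c_1}\rVert+\lVert v_{c_1}-v_{NN}\rVert}\leq\frac{\lVert q-v_{c_1}\rVert}{\lVert q-v_{c_2}\rVert}$, and therefore any match whose t-ratio exceeds $\tau$ also has local 1-ratio exceeding $\tau$, i.e. it fails the local 1-ratio test.

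The only non-mechanical point — the step I would be careful to state explicitly — is the second inequality $\lVert q-v_{c_2}\rVert \leq \lVert q-v_{NN}\rVert$, which relies on $v_{NN}$ being a legitimate competitor for the role of local second nearest neighbor. Membership $v_{NN}\in c$ is immediate from the definition of $v_{NN}$, and $v_{NN}\neq v_{c_1}$ holds by the nearest-neighbor convention; with $v_{c_2}$ read as the second-nearest view of $q$ in $c$ the argument then goes through directly. (If one instead insists that $v_{c_1},v_{c_2}$ be views of distinct 3D points, as in the re-ranking discussion, one would additionally want the analogous restriction on $v_{NN}$ so that it still lies in the feasible set for $v_{c_2}$; this is a minor bookkeeping point and does not affect the structure of the proof.) Apart from this, the argument is short: monotonicity of $x\mapsto \lVert q-v_{c_1}\rVert/x$ together with a single application of the triangle inequality, mirroring the proof of Proposition 1.
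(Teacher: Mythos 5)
Your proposal is correct and follows exactly the same route as the paper's proof: the chain $\lVert q-v_{c_2}\rVert \leq \lVert q-v_{NN}\rVert \leq \lVert q-v_{c_1}\rVert + \lVert v_{c_1}-v_{NN}\rVert$, with the first inequality from $v_{NN}\in c\setminus\{v_{c_1}\}$ being a valid competitor for the second local nearest neighbor and the second from the triangle inequality, followed by comparing the two ratios. Your extra care about the distinct-3D-points caveat is a reasonable observation but does not change the argument.
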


\begin{proof}
Let $q$ be a query feature, $v_{c_1}$ and $v_{c_2}$ the fist and second local
nearest neighbors in a cluster $c$, and $v_{NN} = kNN(v_{c_1},c,1)$. We can
bound the distance to the second nearest neighbor by the inequalities:
\begin{equation}
   \lVert q-v_{c_2} \rVert \leq 
   \lVert q-v_{NN} \rVert \leq 
  \lVert q-v_{c_1} \rVert + \lVert v_{c_1}-v_{NN} \rVert 
\end{equation}
where the first inequality holds since 
$\lVert q-v_{c_2} \rVert \leq \lVert q-v \rVert \quad \forall \, v \in c \setminus v_{c_1}$, 
and the second holds by the triangle inequality. Thus, 
\begin{equation}
   \frac{\lVert q-v_{c_1} \rVert}{\lVert q-v_{c_2} \rVert} \geq
   \frac{\lVert q-v_{c_1} \rVert}{\lVert q-v_{c_1} \rVert + \lVert v_{c_1} - v_{NN} \rVert}  
\end{equation}
Consequently, a singleton match that fails the t-ratio test will always fail
the local 1-ratio test. The t-ratio test thus only filters correspondences that
would have failed the local ratio test if $v_{c_2}$ was available.
\end{proof}

\paragraph{Back-matching and fitting:}
To provide additional robustness to outliers, we can \textit{back match} views
(model feature point descriptors) which were indicated as candidate
correspondences from the forward matching.  For any such candidate matching
view, we search for the first and second nearest neighbor matches using a
kd-tree built over the query image features and apply the 1-ratio test.  We
then select as the final set of correspondences the intersection of pairs
$(q,v)$ that passed the forward and back matching process.  These pairs are
cluster-wise \textit{best buddies} \cite{dekel2015best}, since each $q$ and $v$
of a pair are both discriminative features in the query and model feature
space. 

\floatplacement{algorithm}{t}
\begin{algorithm}
\small
    \caption{Cluster-Wise Ratio Test}
    \begin{algorithmic}
        \State {\bf INPUT:} matches $\mathcal{M}$, clusters $\mathcal{C}$, threshold $\tau$
        \State $\mathcal{M_{F}} = \emptyset $
	\For{ $c \in \mathcal{C}$ }
	  \For{$(q,v_{c_1}) \in \mathcal{M}$ with $v_{c_1}\in c$}
	    \If{$(q,v_{c_2}) \in \mathcal{M}$ with $v_{c_2}\in c$}
	      \State $\alpha_{(q,c)} = \frac{\lVert q-v_{c_1} \rVert}
                                                 {\lVert q-v_{c_2} \rVert}$
      \Comment{local 1-ratio test}
	    \Else
              \State $v_{NN} = kNN(v_{c_1},c,1)$
	      \State $\alpha_{(q,c)} = \frac{\lVert q-v_{c_1} \rVert}
		     {\lVert q-v_{c_1} \rVert + \lVert v_{c_1}-v_{NN} \rVert}$
      \Comment{t-ratio test}
	    \EndIf
	    \If{$\alpha_{(q,c)}\leq \tau$}
		\State $\mathcal{M_{F}} = \mathcal{M_{F}} \cup (q,v_{c_1})$\
	    \EndIf
	  \EndFor
	\EndFor
    \State \textbf{return} $\mathcal{M_{F}}$
    \end{algorithmic}
    \label{alg:fwdrt}
\end{algorithm}

\begin{table}
\begin{center}
{\scriptsize
\setlength{\tabcolsep}{2pt}
\begin{tabular}{|c||c|c|c|c||c|c|c|c|c|}
\hline
\#clusters & \#imgs & \#inl & ratio & err. & fwd [s] & RT [s] & bck [s] & RNSC [s]& total [s] \\ \hline \hline
1          & 481    & 115   & 0.74  & 0.69 & 0.821   & 0.008  & 0.021   & 0.046  & 0.895 \\ \hline
50         & 477    & 127   & 0.59  & 0.66 & 0.818   & 0.008  & 0.028   & 0.061  & 0.915\\ \hline
500        & 480    & 133   & 0.56  & 0.61 & 0.821   & 0.009  & 0.038   & 0.066  & 0.934\\ \hline
5129       & 482    & 136   & 0.55  & 0.62 & 0.833   & 0.009  & 0.048   & 0.070  & 0.961\\ \hline
\end{tabular}
}
\end{center}

\caption[Localization using multiple nearest neighbors]{Quantitative results on
the 520 test image set using the proposed localization framework of algorithm
\ref{alg:fwdrt} and best-buddy filtering. We used 5 nearest neighbors in the
k-NN search. We evaluated four spatial subdivisions, including a finest
clustering in which each camera in the model is considered a single cluster.
Localization accuracy is competitive with exhaustive local matching with
achieving runtimes comparable to global matching.
}
\label{tab:base-table}
\end{table}

\subsection{Cluster-wise ratio-tests are effective and fast} \label{sec:effective}

The cluster-wise ratio test, defined in Algorithm \ref{alg:fwdrt}, prunes a
large number of non-discriminative correspondences while still maintaining the
locally unique matches.  The complexity of this algorithm is linear in the
number of forward correspondences $N_{F}$.  For every local NN $v_{c_1}$, we
simply look for its second NN pair $v_{c_2}$ within the list of $k$ nearest
neighbors. The list of intra-cluster nearest neighbors is simply a view-to-$v_{NN}$
vector that can be pre-computed offline and accessed at constant time, similar to
vocabulary-based methods that store view-to-word assignments. Hence, at most $N_{F}$
ratio tests will be performed.

We evaluated this cluster-wise approach using the same settings as our gold
standard baseline experiment. We added a finer division of the model,
consisting of atomic clusters with a single image each. Table
\ref{tab:base-table} shows the localization performance on these different
granularities.  A single global cluster gives surprisingly good results in the
number of localized cameras, although it provides worse camera position
results. This is due to the restrictiveness of the ratio test in denser search
spaces, yielding fewer inliers and missing some discriminative correspondences
that would improve results. As we increase the number of clusters, the
localization errors are reduced (8 cm on average) thanks to the cluster-wise
ratio test which provides more high confidence matches (at the expensive of
longer RANSAC runtimes). We obtain best results using the finest clustering (a
single model camera per cluster), successfully localizing 482 images.  Compared
to the gold-standard of Table \ref{tab:gs-table}, our strategy is 
competitive, by only dropping 5\% in localization performance while being three
orders of magnitude faster. Moreover, the finest single-image clusters provide
the best result we can avoid running any complex clustering method (e.g.,
spectral clustering).  We use single-image clusters in the remainder of the
paper. 

\section{Accelerating Matching by Pose Voting} \label{sec:bck-pass}
As Table \ref{tab:gs-table} suggests, with appropriate cluster-wise testing,
forward matching now constitutes the primary computational bottleneck.
Short of simplifying the model (e.g., as pursued by \cite{Li,li2010location}), 
how might we further accelerate the matching process?  A natural strategy is 
to carry out forward matching incrementally and stop as soon as we have a 
sufficient number of matches to guarantee a good result. From this perspective, 
we can view forward matching as ``voting'' for the location of the query camera.
Unlike \cite{zeisl2015camera,svarm2014accurate} where votes were cast into a 
uniformly binned camera translation space, we use each model camera pose as
a putative bin to cast our votes (also used in \cite{Sattler2015}).  We avoid
additional data structures like vocabulary trees in favor of storing a simple
but effective view-to-$v_{NN}$ vector that enforces local uniqueness.  Once we
have accumulated enough votes to narrow down the camera pose to a few candidate
clusters, we can terminate forward matching and carry out back matching with
little loss in accuracy.

\subsection{Coarse localization using cluster matching}

To analyze how many votes are needed to determine a good localization, we frame
the problem as that of \textit{location recognition}
\cite{Sattler_2016,Torii_2015,baatz2012large,schindler2007city}, namely
producing a short ranked list of model images that depict the same general 
location as the query image. We follow the evaluation procedure
of~\cite{cao2013graph}, reporting if there exists at least one image among the
\textit{top-k} images that shares 12 or more fundamental matrix inliers. We
benchmark performance on two datasets: Eng-Quad and Dubrovnik
\cite{li2010location}.

\begin{table}[t]
\begin{center}
{\scriptsize

\begin{tabular}{|c||c|c|c|c||c|}
\multicolumn{6}{c}{Dubrovnik - 800 test images}  \\ \hline
Method             & top-1   & top-2   & top-5   & top-10  & Time [s] \\ \hline \hline
$N_{F}=50$         & 99.00\% & 99.38\% & 99.62\% & 99.88\% & 0.048    \\ \hline
$N_{F}=100$        & 99.62\% & 99.75\% & 99.88\% & 99.88\% & 0.085    \\ \hline
$N_{F}=200$        & 100\%   & 100\%   & 100\%   & 100\%   & 0.157    \\ \hline

\end{tabular}

\vspace{2 mm}

\begin{tabular}{|c||c|c|c|c||c|}
\multicolumn{6}{c}{Eng-Quad - 520 test images}  \\ \hline
Method             & top-1    & top-2   & top-5   & top-10  & Time [s]\\ \hline \hline
$N_{F}=50$         & 83.85\%  & 85.96\% & 88.46\% & 88.65\% & 0.064 \\ \hline
$N_{F}=100$        & 84.62\%  & 86.54\% & 89.62\% & 90.96\% & 0.125 \\ \hline
$N_{F}=200$        & 85.77\%  & 87.69\% & 90.38\% & 91.35\% & 0.242 \\ \hline
$N_{F}=500$        & 86.15\%  & 88.27\% & 90.96\% & 91.35\% & 0.502 \\ \hline
All features       & 86.92\%  & 89.23\% & 91.15\% & 91.92\% & 0.833 \\ \hline
\end{tabular} 

}
\end{center}

\caption[Location recognition of Eng-Quad and Dubrovnik]{We achieve perfect
location recognition results on the Dubrovnik dataset using a random subset of
200 query features that pass the k-ratio and cluster-wise ratio tests,
suggesting that our approach successfully finds local discriminative
correspondences for all 800 test images.  We also obtain good results in the
more challenging Eng-Quad dataset, recognizing 478 ($91.92\%$) images. This
agrees with the baseline results obtained in Table \ref{tab:base-table}. 
}
\label{tab:locreg-table}
\end{table}

The results in Table \ref{tab:locreg-table} are inspiring. Algorithm
\ref{alg:fwdrt} is able to recognize the location of all 800 test images in the
Dubrovnik dataset using 200 random features passing the k-ratio test. Results
on the more challenging Eng-Quad dataset provide almost $92\%$ accuracy on
recognizing the landmarks of the 520 query images for which we have a ground
truth pose. Importantly, a random subset of a few hundred query features
achieves nearly as good recognition results as using all image features (a
query image usually has 5,000 to 10,000 features). This suggests that the
forward matching can be terminated early while still maintaining good
localization performance. 

\subsection{Prioritized Back Matching}

Determining the correct model image only provides rough camera location and
additional work is needed to estimate the precise camera pose.  To reap the
computational benefits of subsampling, we thus modify our framework slightly.
We use forward matching with a subset of $N_{F}$ query features in order to
identify likely model images.  We then perform back matching within candidate
images in order to expand the set of matches used for fine camera pose
estimation. This back matching is carried out using a greedy prioritized search
over images ranked by votes and further exploits co-visibility information
encoded in the SfM model to find additional distinctive matches that were not
identified during the forward (sub-sampled) matching.

\floatplacement{algorithm}{t}
\begin{algorithm}
\small
    \caption{Prioritized Back Matching}
    \begin{algorithmic}
        \State{{\bf INPUT:} forward matches $\mathcal{M_{F}}$, clustering $\mathcal{C}$, 
           query features $Q$, threshold $\tau$, minimum number of matches $N_{B}$,
           scene graph $G$}
        \State $H_{c} = |(q,v) \in \mathcal{M_{F}} : v \in c| \quad \forall c \quad$   \Comment{Cast votes}
        \State $\mathcal{M_{B}} = \emptyset$, $VC = \emptyset$
        \While{ $(|\mathcal{M_{B}}| < N_{B}) \wedge (|VC| \leq 20) $ }
            \State $c^{*} = \argmax_{c \notin VC} H$
            \State $\mathcal{M}_{c^{*}} = \emptyset$ 
            \For{$v \in c^{*}$} 
              \State $q_{1},q_{2} = kNN(v,Q,2)$ \Comment{Back match $v$ to query}
              \State $\alpha_{v} = \frac{\lVert v-q_{1} \rVert}{\lVert v-q_{2} \rVert}$
              \If{$\alpha_{v} \leq \tau$}
                 \State $\mathcal{M}_{c^{*}} = \mathcal{M}_{c^{*}} \cup (q_{1},v) $
              \EndIf
            \EndFor
            \State $\mathcal{M_{B}} = \mathcal{M_{B}} \cup \mathcal{M}_{c^{*}}$ 
            \If{$|\mathcal{M}_{c^{*}}| \geq 12$} 
	            \For{$(q,v) \in \mathcal{M}_{c^{*}}$}
                \For{$c' \in \mathcal{C}$ with $v \in c'$}
  	              \State $H_{c'} = H_{c'} + 1$  \Comment{Update votes} 
                \EndFor 
              \EndFor
            \EndIf
            \State $VC = VC \cup c^{*}$ \Comment{Update visited clusters (images)}
        \EndWhile
        \State \textbf{return} $\mathcal{M_{B}}$
    \end{algorithmic}
    \label{alg:bckrt}
    \vspace{-0.04in}
\end{algorithm}

Algorithm \ref{alg:bckrt} describes our back-matching approach. Given the
forward matches found using Algorithm \ref{alg:fwdrt}, we select the most voted
model image $c^{*}$ and back-match all of its views against the query image
using the standard 1-ratio test with threshold $\tau$.
The correspondences $M_{c^{*}}$ found are added to the pool of back matched
pairs $\mathcal{M_{B}}$ used for the fine pose estimation. These back matches
are also treated as votes.  We use the SfM model's camera-point visibility
graph $G$, to cast votes for other images that observe the same views as in
$M_{c^{*}}$.  These new votes increase the likelihood that neighboring images
are selected for subsequent rounds of back-matching. To avoid introducing
noise into the voting process, we only allow a back-matched image to cast
votes if it depicts the same location (i.e., returns 12 or more matches).  The
algorithm terminates when $\mathcal{M_{B}}$ is large enough to guarantee a good
camera localization, or a certain total number of images have been
back-matched.

\floatplacement{algorithm}{t}
\begin{algorithm}
\small
\caption{Camera Localization}
\begin{algorithmic}
        \State{{\bf INPUT:} Query features $Q$, Model features $\mathcal{V}$, 
        co-visibility graph $G$, camera clusters $\mathcal{C}$, NN search depth $k$,  
        ratio test threshold $\tau$, match count thresholds $N_{F},N_{B}$, 
        projection error threshold $\epsilon$} 
        \State $\mathcal{M} = $ GLOBAL-FORWARD-MATCH($Q,\mathcal{V},k,N_{F},\tau$)
        \State $\mathcal{M_{F}} = $ CLUSTER-WISE-RATIO-TEST($\mathcal{M},\mathcal{C},\tau$)
        \State $\mathcal{M_{B}} = $ PRIORITY-BACK-MATCH($\mathcal{M_{F}}, N_{B}, G, \tau$)
        \State $I_{P},\delta = $ ROBUSTFITTING($\mathcal{M_{B}},\epsilon$)
        \If{$|\delta \leq \epsilon| \geq 12$}
                \State \textbf{return} Camera Pose $I_{P}$ 
        \Else
                \State \textbf{return} Error - Pose not found
        \EndIf
    \end{algorithmic}
    \label{alg:toplevel}
    \vspace{-0.04in}
\end{algorithm}

\section{Benchmark Evaluation} \label{sec:results}

We evaluated our approach (Algorithm \ref{alg:toplevel}) on
three different datasets: Eng-Quad, Dubrovnik, and Rome.  Rome is a large
dataset of 15,179 training and 1,000 test images.
Dubrovnik is a popular 6,044 training and 800 test image dataset whose SfM 
model is roughly aligned to geographic coordinates, allowing for quantitative
metric evaluation. While Eng-Quad has fewer images, it is perhaps the most challenging due
to the presence of strongly repeated structures in the modern architectural
designs it depicts.  When using P3P, we used EXIF metadata for Eng-Quad test
images and ground-truth focal lengths from the SfM models for Dubrovnik and
Rome. We also briefly analyze results on the city-wide SF-0 dataset \cite{Li}.

\begin{table*}[t]
\begin{center}
{\scriptsize

\setlength{\tabcolsep}{3pt}

\begin{tabular}{cc}

\begin{tabular}{|l||c|c|c|ccc|cc|c|}
\multicolumn{10}{c}{Dubrovnik (Original) - 800 test images}  \\ \hline
 & & & & \multicolumn{3}{c|}{error [m]} & & & \\ 
Method & \#images & \#inliers & ratio & Q1 & median & Q3 &  $<$18.3m &$>$400m&time [s] \\ \hline \hline
Sattler \cite{Sattler2011} & 783.9 &$\leq$100 & -    & 0.4   & 1.4  & 5.9  & 685 & 16& 0.31 \\ \hline
Sattler \cite{Sattler2012} & 795.5 &$\leq$200 & -    & 0.4   & 1.4  & 5.3  & 704 & 9 & 0.25 \\ \hline
Zeisl \cite{zeisl2015camera}& 796  & -        & -    & 0.19  & 0.56 & 2.09 & 744 & 7 & 3.78 \\ \hline
Svarm \cite{svarm2016city} & 798   & -        & -    & -     & 0.56 & -    & 771 & 3 & 5.06 \\ \hline
Ours (P3P)                 & 800   & 358      & 0.65 & 1.09  & 7.92 & 27.76& 550 & 10& 0.62 \\ \hline
Ours (P4Pf)                & 800   & 468      & 0.79 & 0.55  & 1.64 & 6.02 & 694 & 15& 0.62 \\ \hline

\multicolumn{10}{c}{}  \\

\multicolumn{10}{c}{Dubrovnik (Corrected) - 777 test images}  \\ \hline
 & & & & \multicolumn{3}{c|}{error [m]} & & & \\ 
Method & \#images & \#inliers & ratio & Q1 & median & Q3 &  $<$18.3m &$>$400m&time [s] \\ \hline \hline
Sattler \cite{Sattler2011} & 771 & 70  & 0.72 & 0.57  & 1.44 & 4.61 & 707 & 1 & 2.58 \\ \hline
Sattler \cite{Sattler2012} & 775 & 69  & 0.74 & 0.59  & 1.58 & 4.91 & 705 & 4 & 0.75 \\ \hline
Ours (P3P)                 & 777 & 591 & 0.88 & 0.33  & 0.66 & 1.60 & 759 & 1 & 0.48 \\ \hline
Ours (P4Pf)                & 776 & 589 & 0.88 & 0.47  & 1.11 & 3.32 & 720 & 3 & 0.48 \\ \hline
\end{tabular}

&

\begin{tabular}{c}

\begin{tabular}{|l||c|c|c|ccc|c|}
\multicolumn{8}{c}{Eng-Quad - 520 test images}  \\ \hline
 & & & & \multicolumn{3}{c|}{error [m]} & \\ 
Method & \#images & \#inliers & ratio  & Q1  & median & Q3 & time [s] \\ \hline \hline
Sattler \cite{Sattler2011} & 402 & 43  & 0.49 & 0.61 & 2.01 & 7.51 & 1.52 \\ \hline
Sattler \cite{Sattler2012} & 457 & 43  & 0.58 & 0.46 & 1.93 & 7.62 & 0.32 \\ \hline
Ours (P3P)                 & 509 & 112 & 0.66 & 0.33 & 0.67 & 1.47 & 0.69 \\ \hline
Ours (P4Pf)                & 504 & 115 & 0.68 & 0.65 & 1.88 & 5.76 & 0.85 \\ \hline
\end{tabular}

\\ \\

\begin{tabular}{|c||c|c|c|c|}
\multicolumn{5}{c}{Rome - 1000 test images}  \\ \hline
Method & \#images & \#inliers & ratio & time [s]\\ \hline \hline
P2F \cite{li2010location}  & 924    & -          & -    & 0.87 \\ \hline
Sattler \cite{Sattler2011} & 976.90 & $\leq100$  & -    & 0.29 \\ \hline
Sattler \cite{Sattler2012} & 991    & $\leq200$  & -    & 0.28 \\ \hline
Ours (P3P)                 & 999    & 281        & 0.54 & 0.75 \\ \hline
Ours (P4Pf)                & 1000   & 458        & 0.83 & 0.74 \\ \hline
\end{tabular}

\end{tabular}

\end{tabular}

}
\end{center}

\caption[Camera localization performance]{Quantitative results of our method 
compared to related methods for camera pose estimation.}
\label{tab:reg-table}
\vspace{-0.1in}
\end{table*}


\paragraph{Dubrovnik correctness:} After carefully analyzing the original
models provided for Dubrovnik, we found that {\it the test set ground truth was
often wrong}, with extremely large focal lengths and misaligned 2D-3D correspondences.
This in turn resulted in large errors in camera
location and poor alignment between projection of 3D points and the
corresponding 2D features. These problems are evident in results published
elsewhere. For example, \cite{sattler2014sampling} report better results using
P4Pf \cite{bujnak2008general} than using P3P with the given ``true'' focal lengths.
This is contrary to what should be expected: knowing the ground-truth focal
length (P3P) should outperform joint estimation of pose and focal length
(P4Pf).  Examples are shown in the supplementary material.

For this reason, we rebuilt a new version of the Dubrovnik ``ground-truth''
model using the same set of keypoints provided for the original dataset and the
excellent SfM package COLMAP \cite{schoenberger2016sfm}.  We aligned the new
model with the original one using a RANSAC-based Procrustes analysis so that
the scale is approximately metric. After alignment, only 3853 of the recovered
6844 images were located within 3 meters from their original position in the
model, further validating our concerns. Our reconstruction provided
ground-truth for 777 of the 800 query images.

\begin{figure}[b]
  \setlength{\tabcolsep}{3pt}
  \centering
  \begin{tabular}{cc}
  \includegraphics[width=0.48\linewidth]{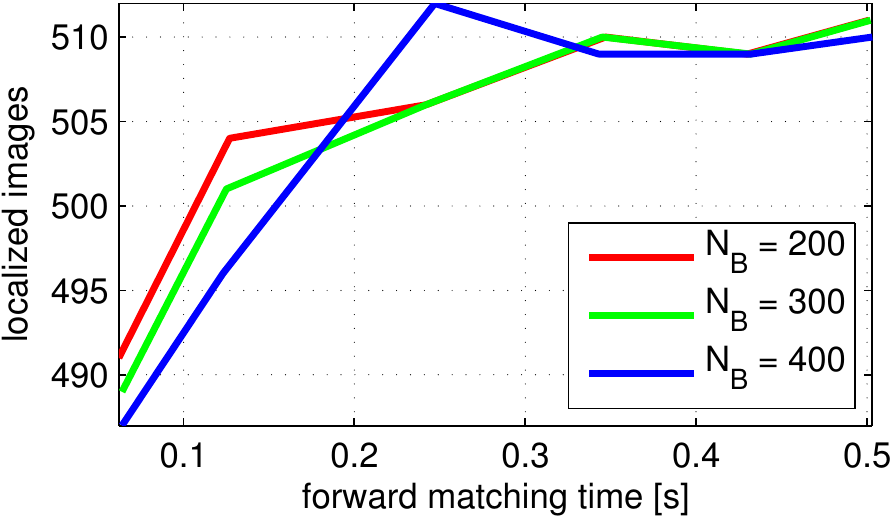} &
  \includegraphics[width=0.48\linewidth]{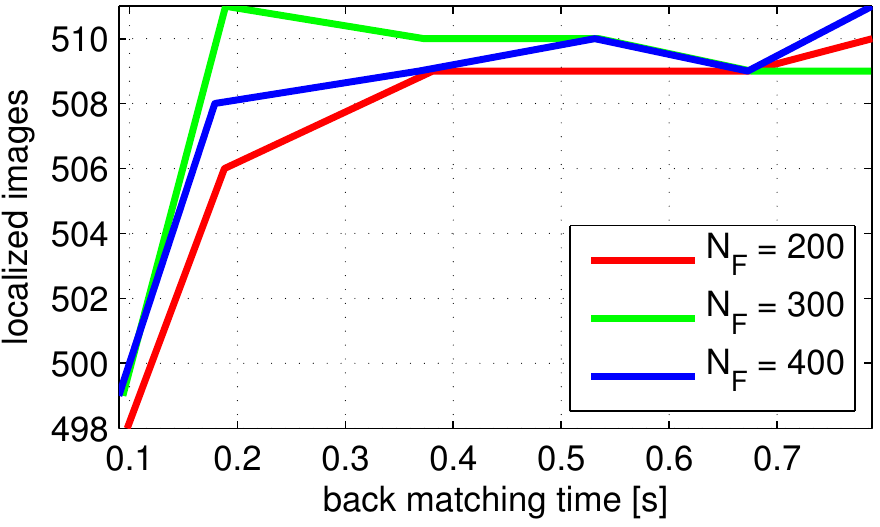}
  \end{tabular}
  \caption[Performance trade-off between forward and back matching]{Anytime
  performance: querying a small number of features dramatically reduces runtime
  without a major loss in localization performance. The forward subsampling
  does not affect rough localization significantly and stabilizes after 0.3
  seconds (left) regardless of the $N_{B}$ value. Similarly, localization
  quickly plateaus after 0.2 seconds at back matching time for different values
  of $N_{F}$ (right).}
  \label{fig:anytime}
\end{figure}

\begin{figure*}[t]
\centering
  \setlength{\tabcolsep}{1pt}
  \def\stackalignment{l}
  \begin{tabular}{cc@{\hskip 10pt}cc}
  \topinset{ \includegraphics[width=0.09\paperwidth]{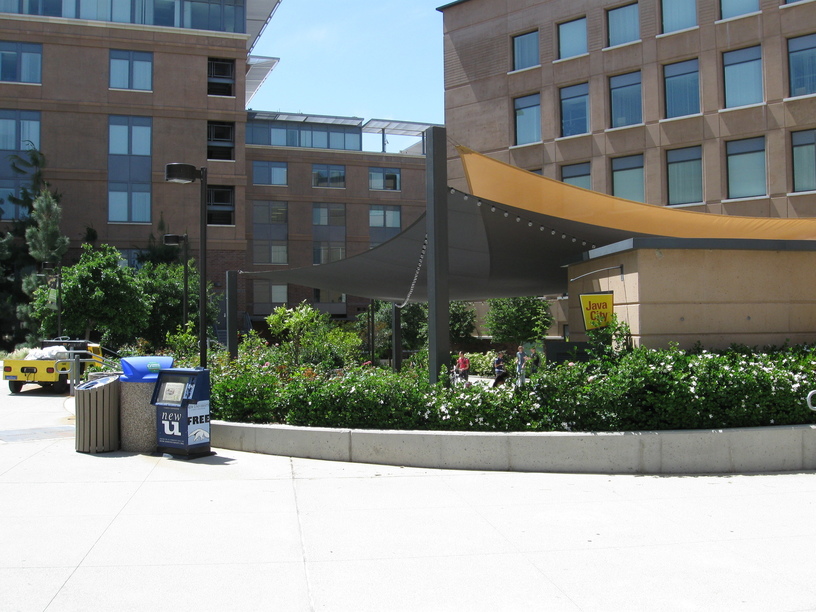} }
  { \includegraphics[width=0.2\paperwidth]{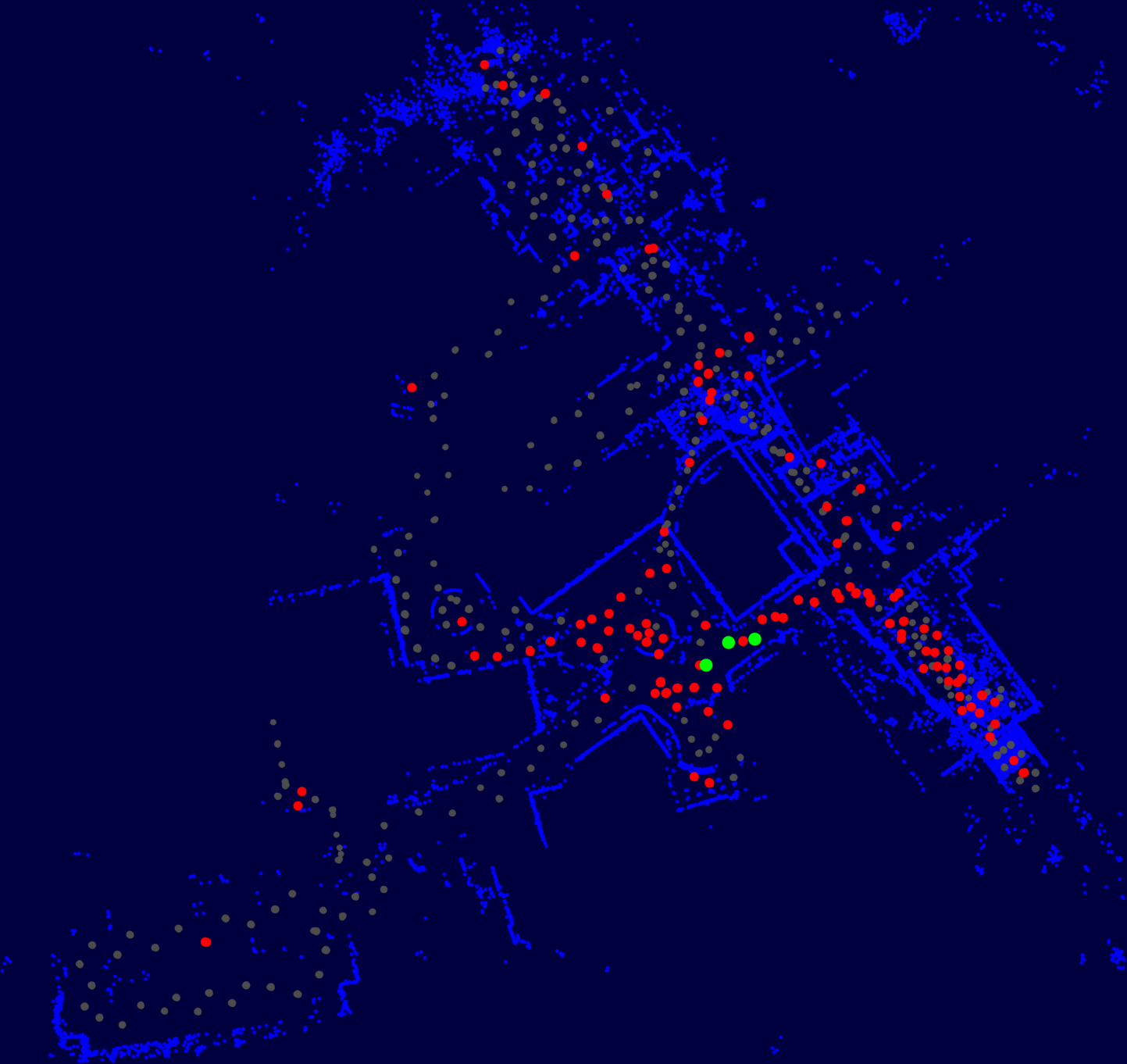} }{25pt}{-5pt}&
  \includegraphics[width=0.2\paperwidth]{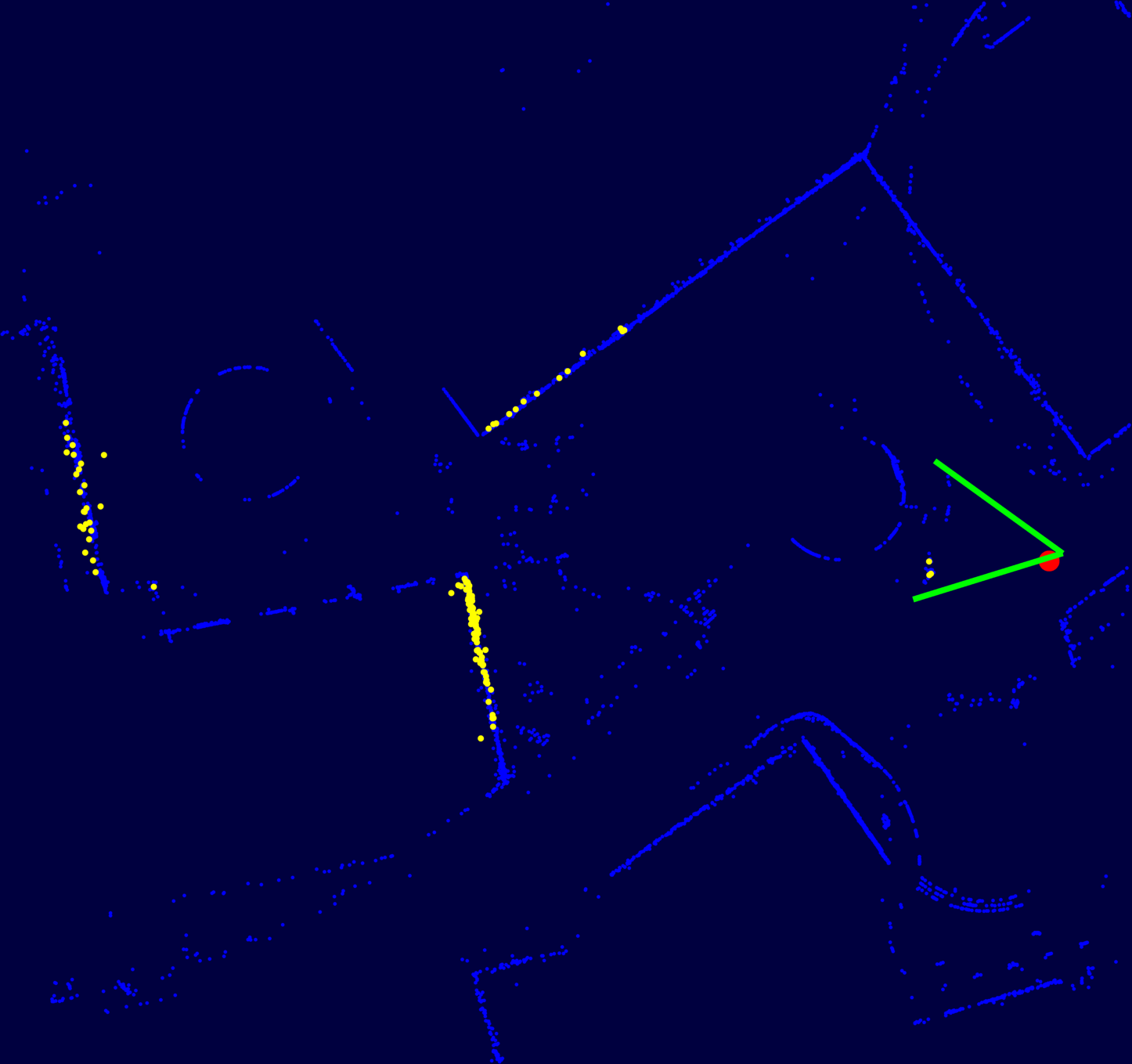}
  &
  \topinset{\includegraphics[width=0.06\paperwidth]{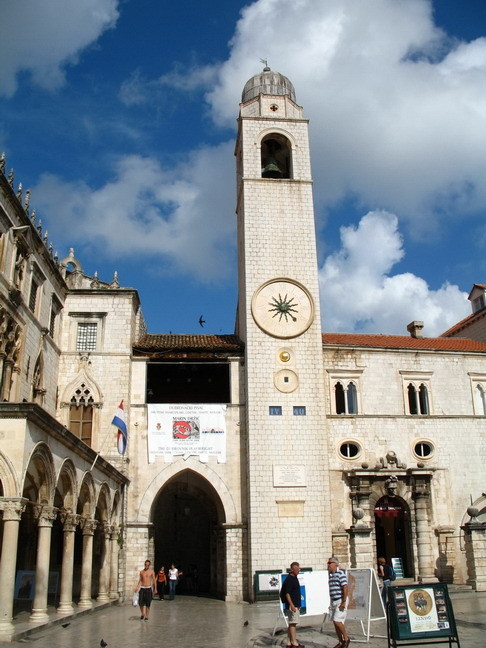}}
  { \includegraphics[width=0.16\paperwidth]{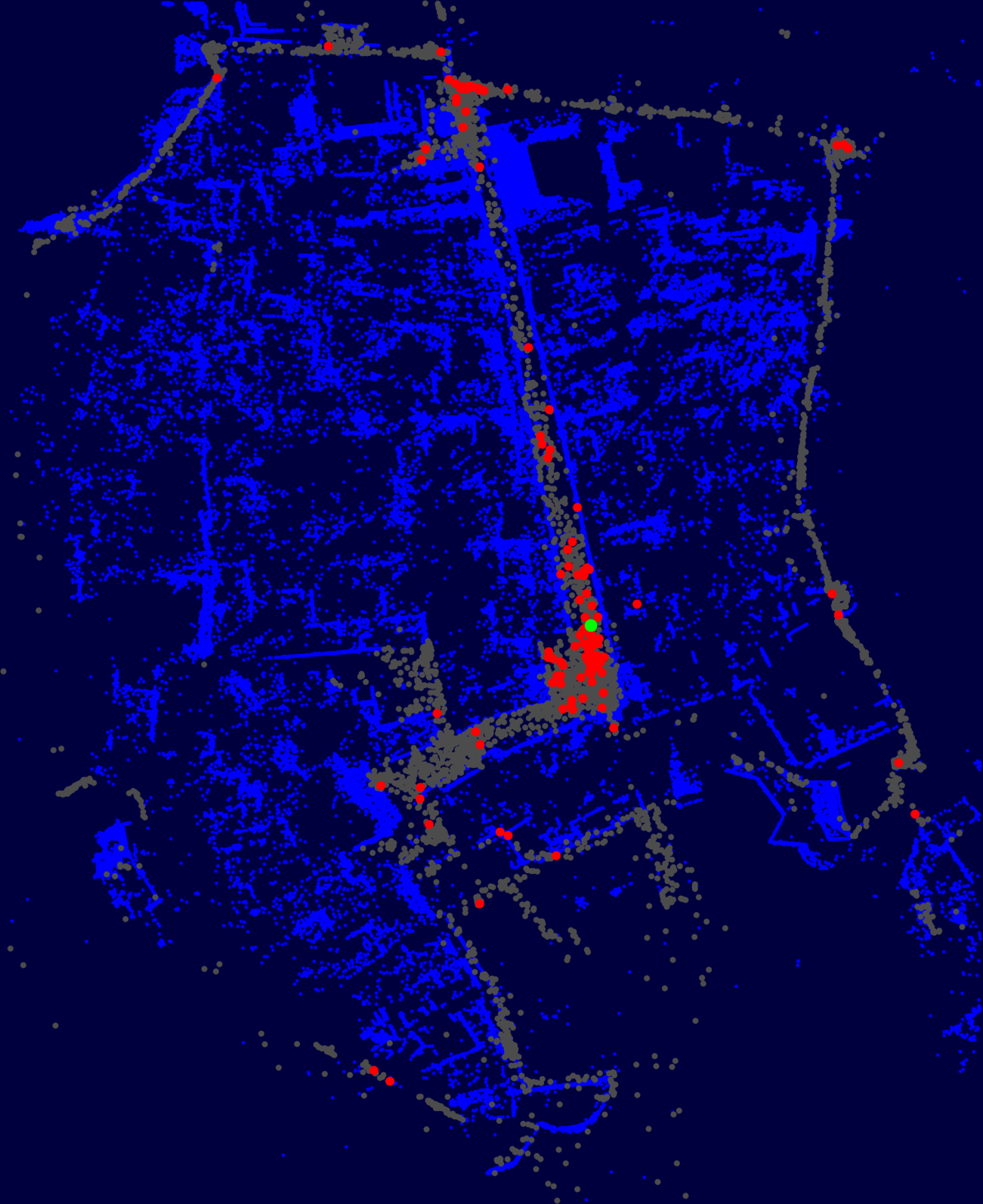} }{30pt}{-5pt} &
  \includegraphics[width=0.16\paperwidth]{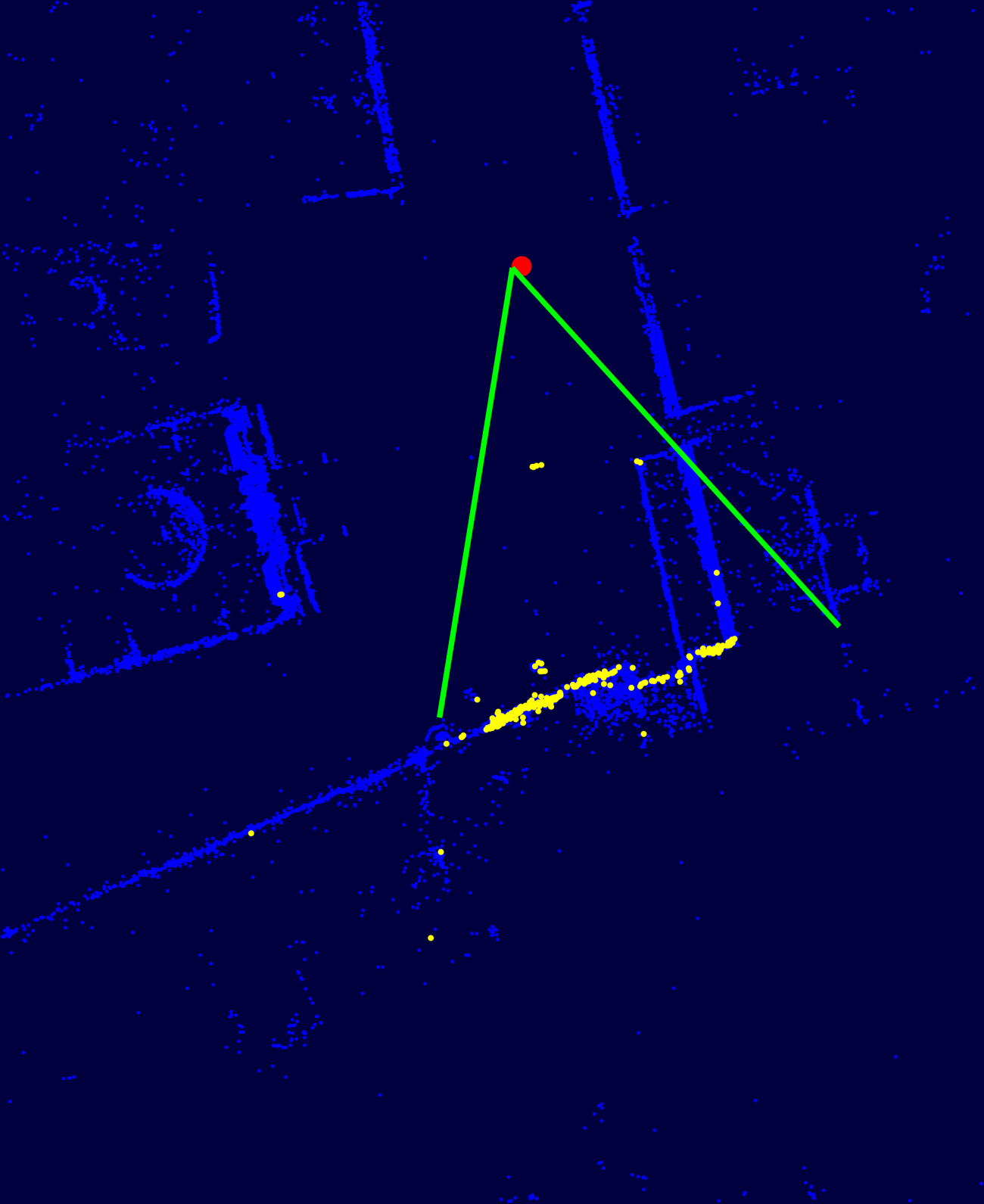}\\

  \topinset{\includegraphics[width=0.09\paperwidth]{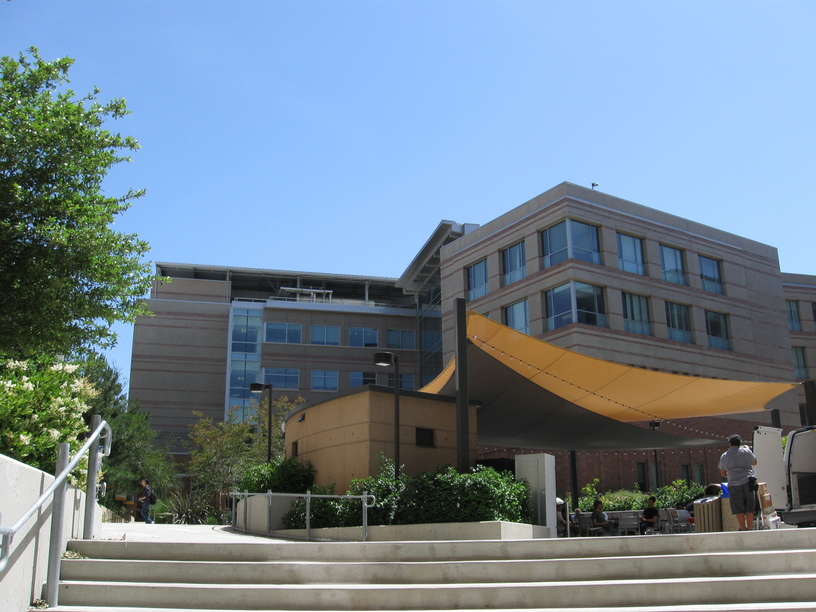}} 
  { \includegraphics[width=0.2\paperwidth]{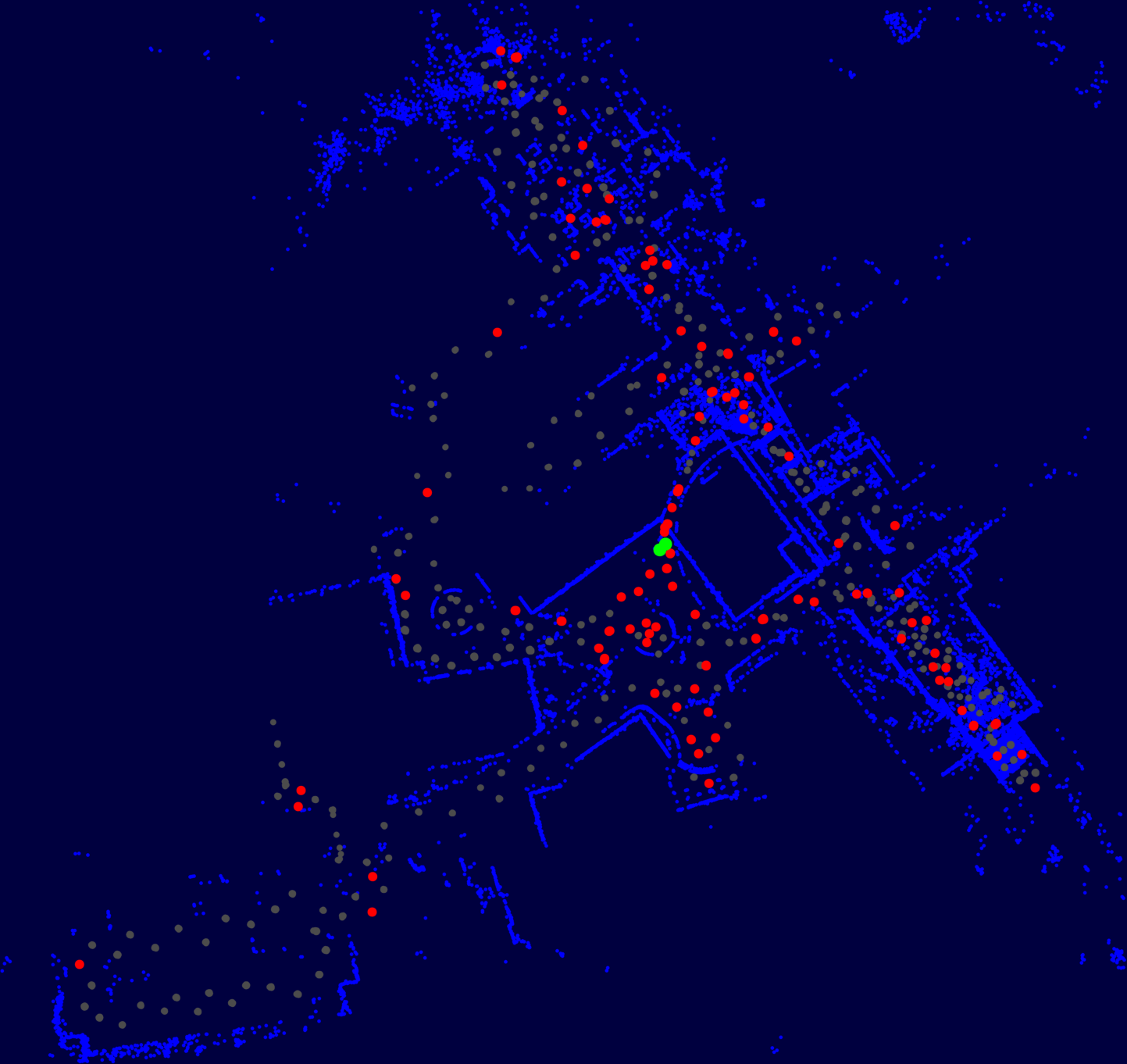} }{25pt}{-5pt} &
  \includegraphics[width=0.2\paperwidth]{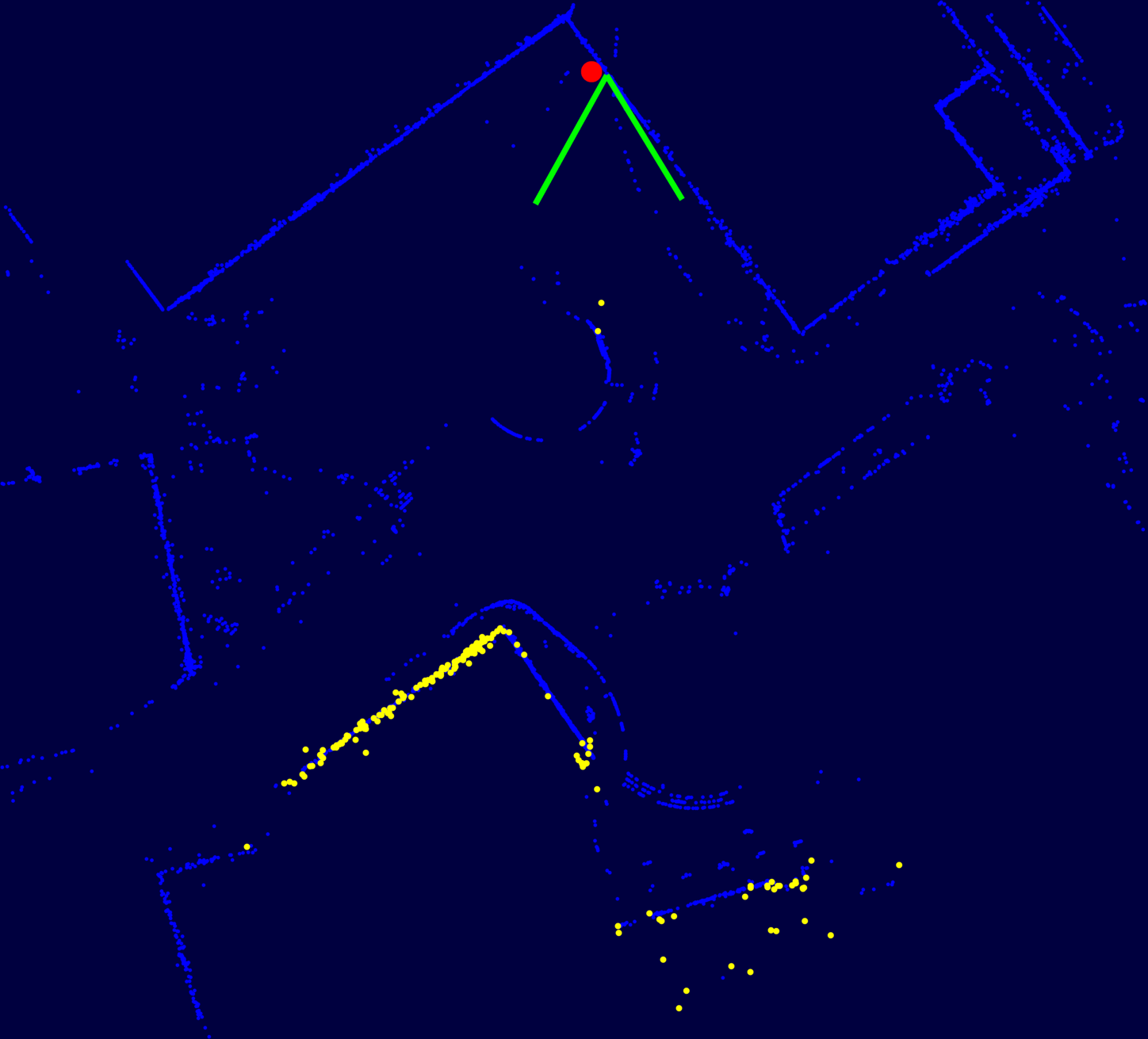} &

  \topinset{\includegraphics[width=0.07\paperwidth]{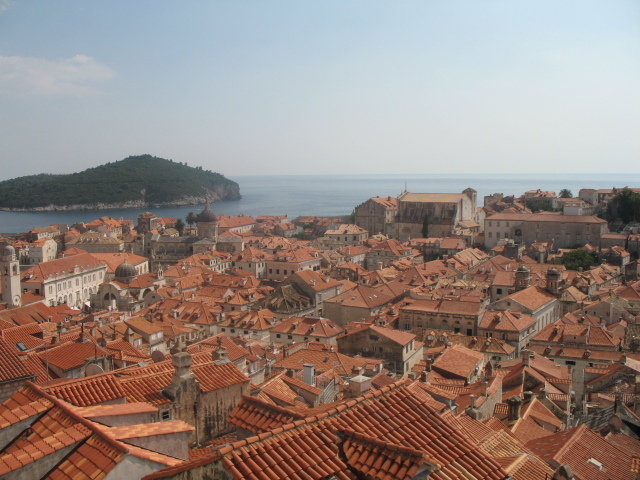}}
  { \includegraphics[width=0.16\paperwidth]{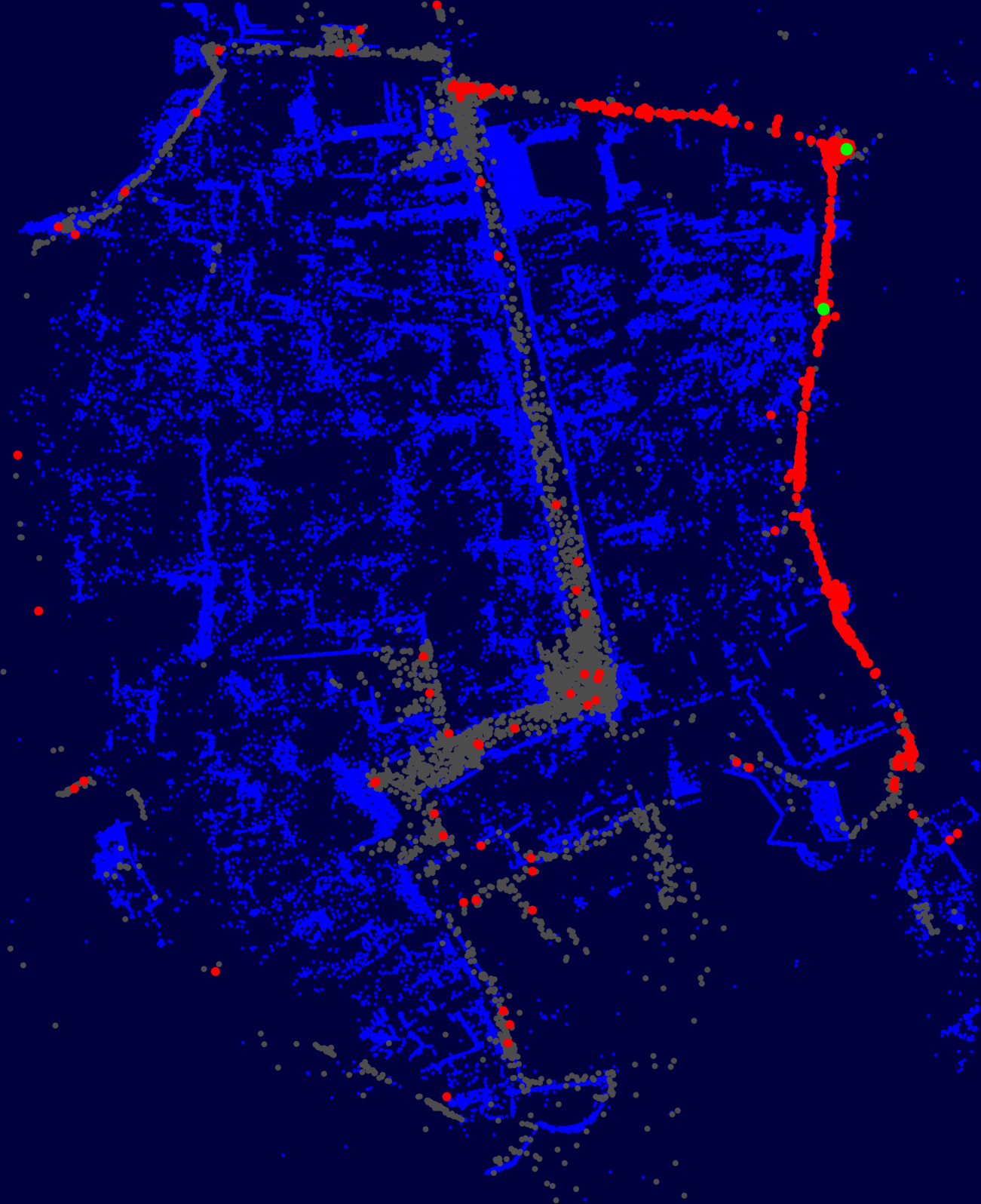} }{30pt}{-5pt} &
  \includegraphics[width=0.16\paperwidth]{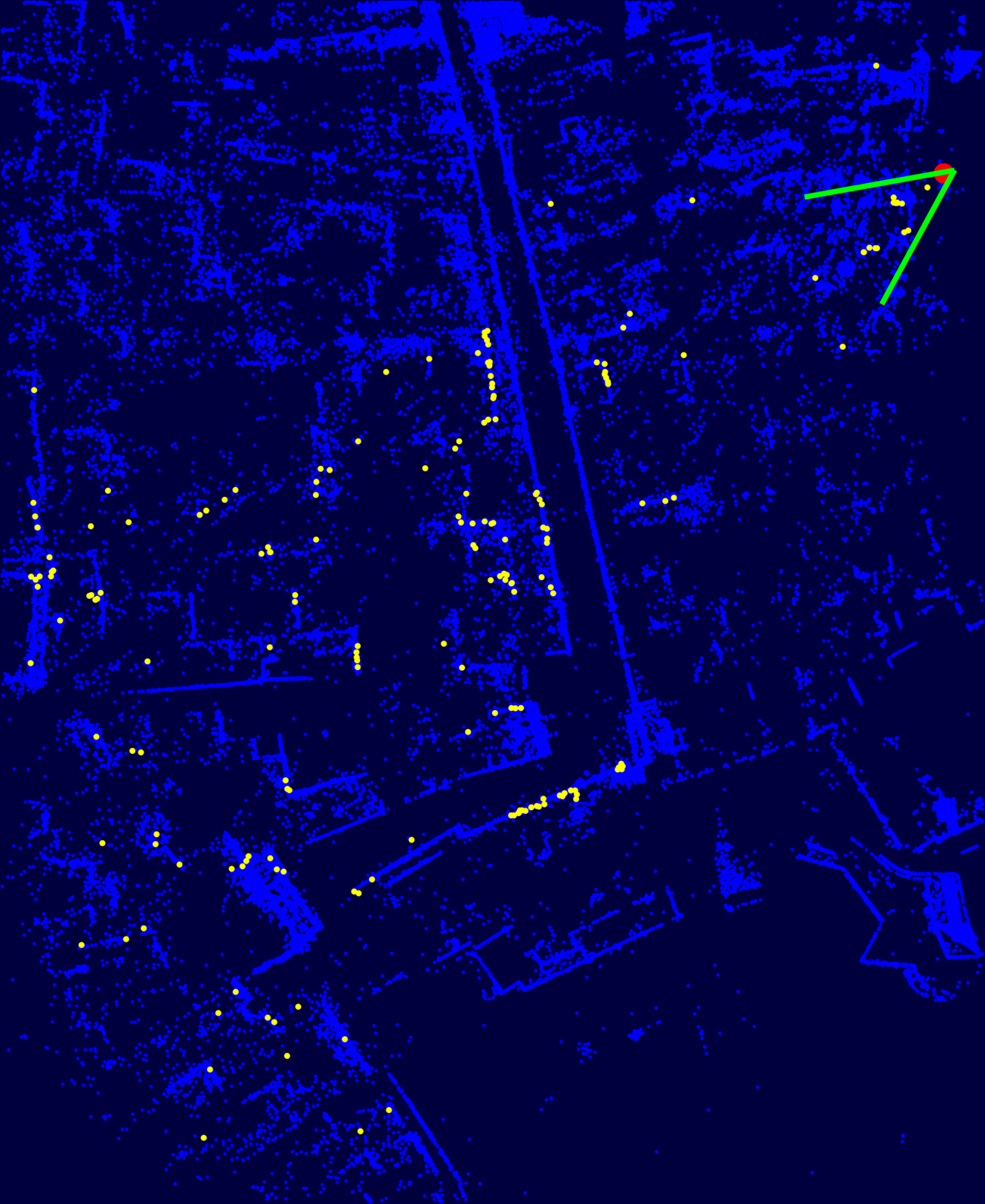} \\

  \topinset{\includegraphics[width=0.09\paperwidth]{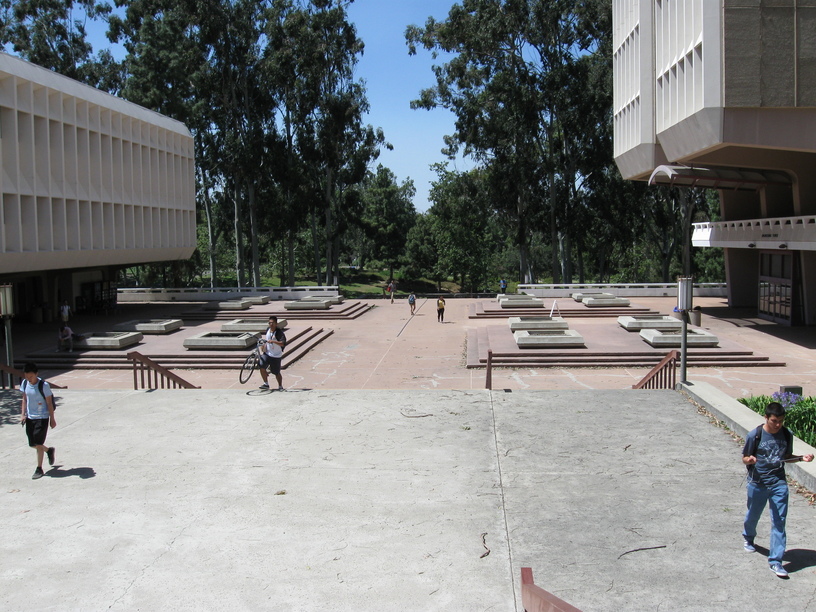}}
  {\includegraphics[width=0.2\paperwidth]{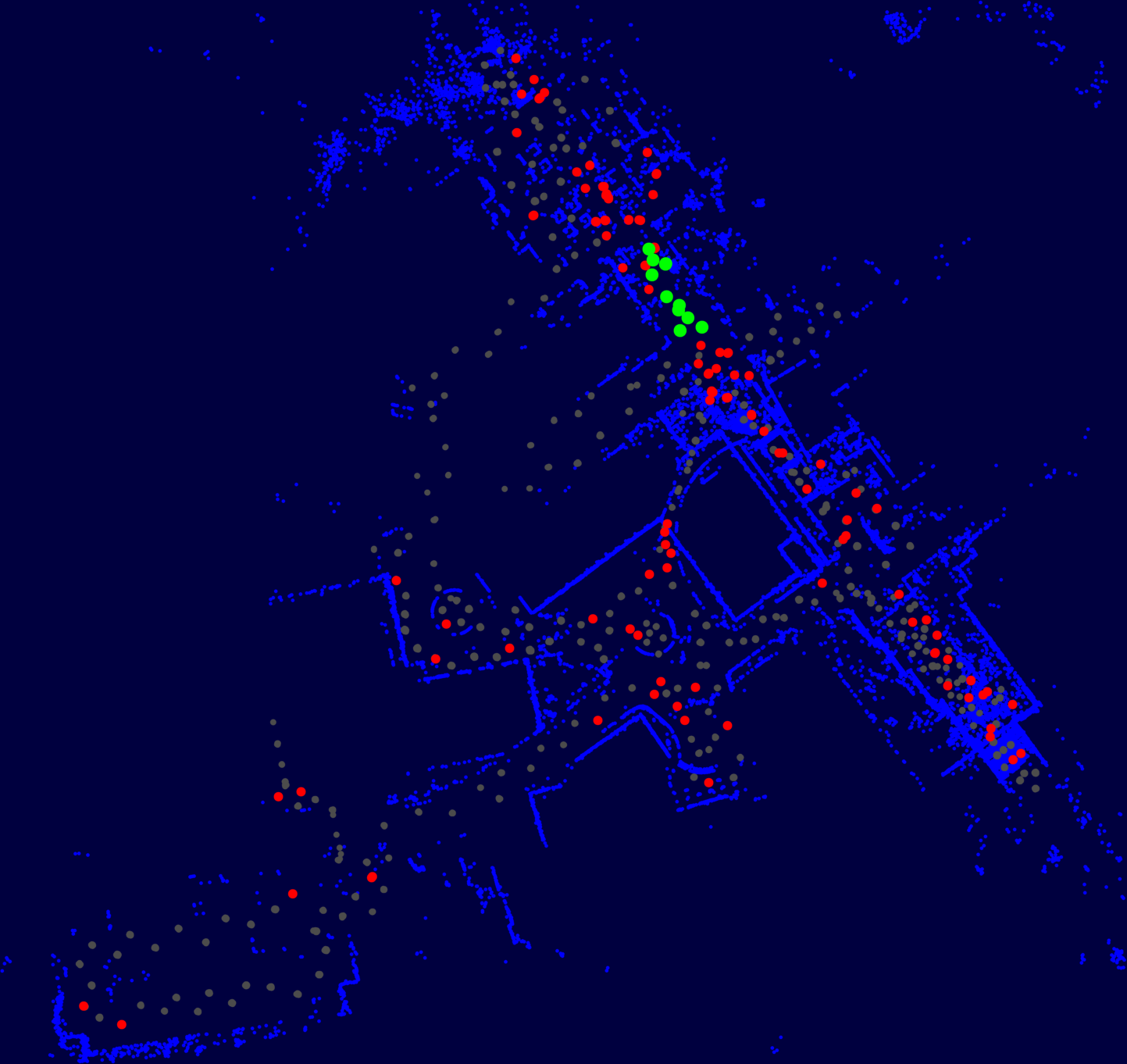}}{25pt}{-5pt} &
  \includegraphics[width=0.2\paperwidth]{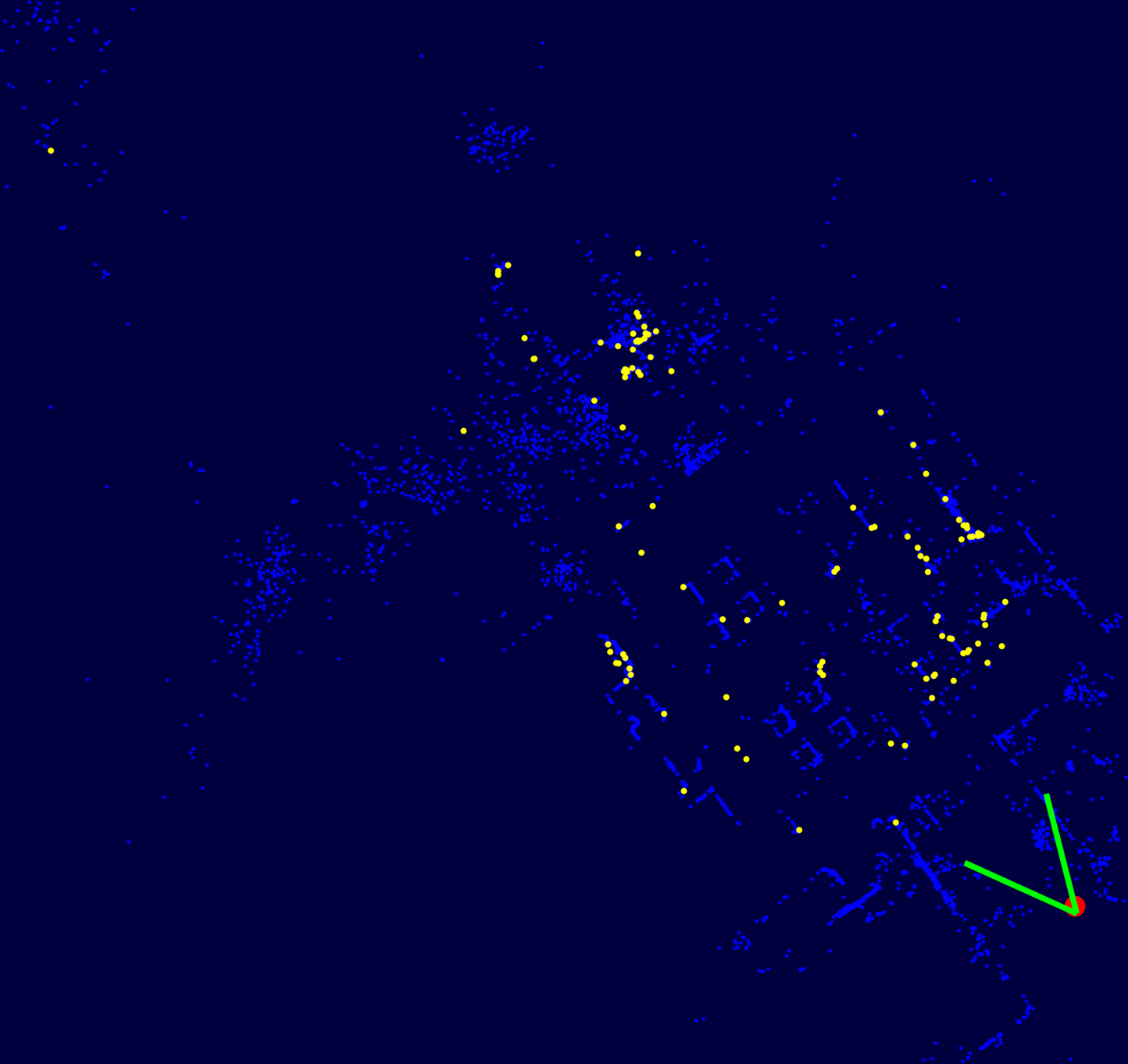} &

  \topinset{\includegraphics[width=0.06\paperwidth]{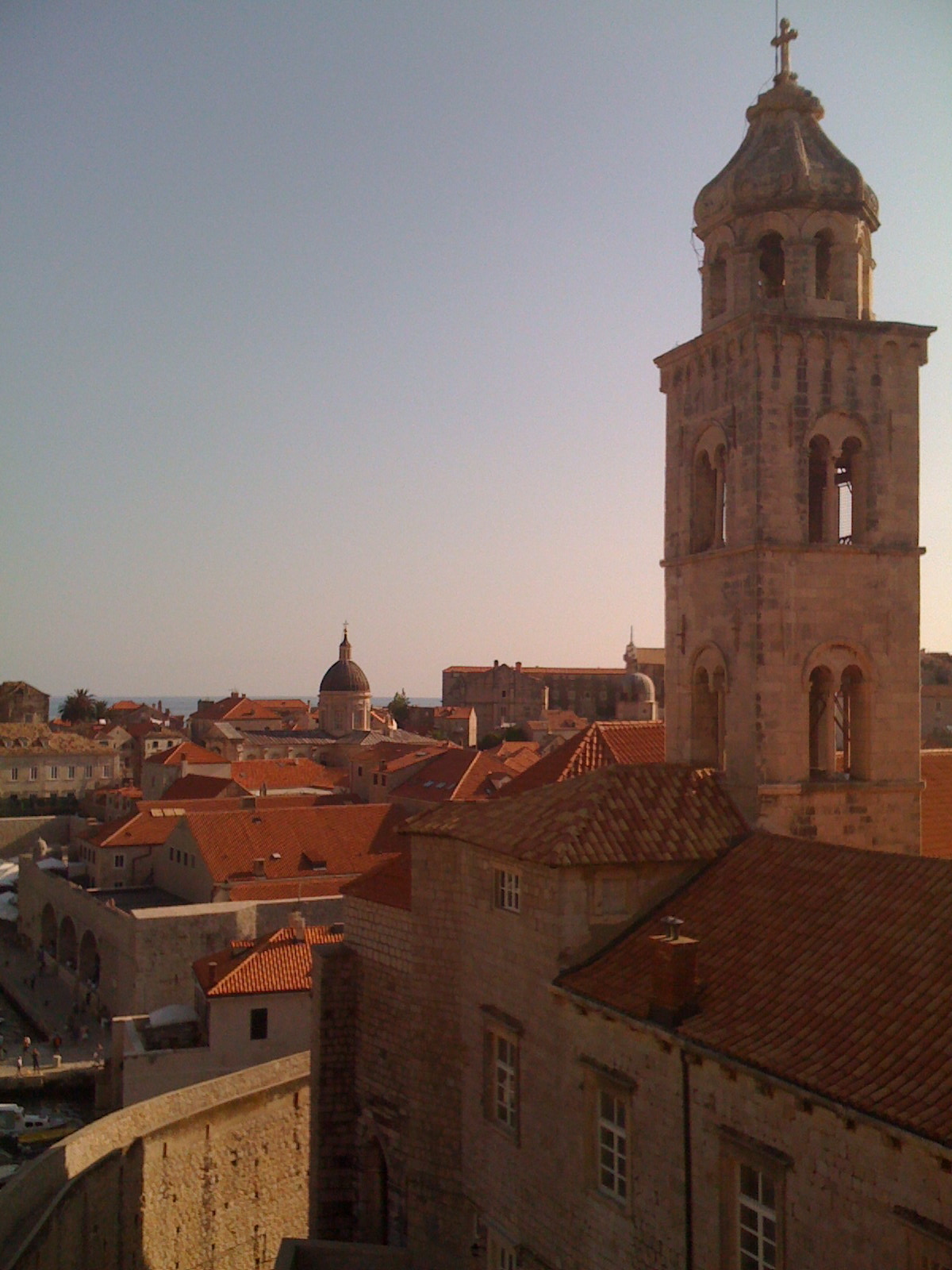}}
  { \includegraphics[width=0.16\paperwidth]{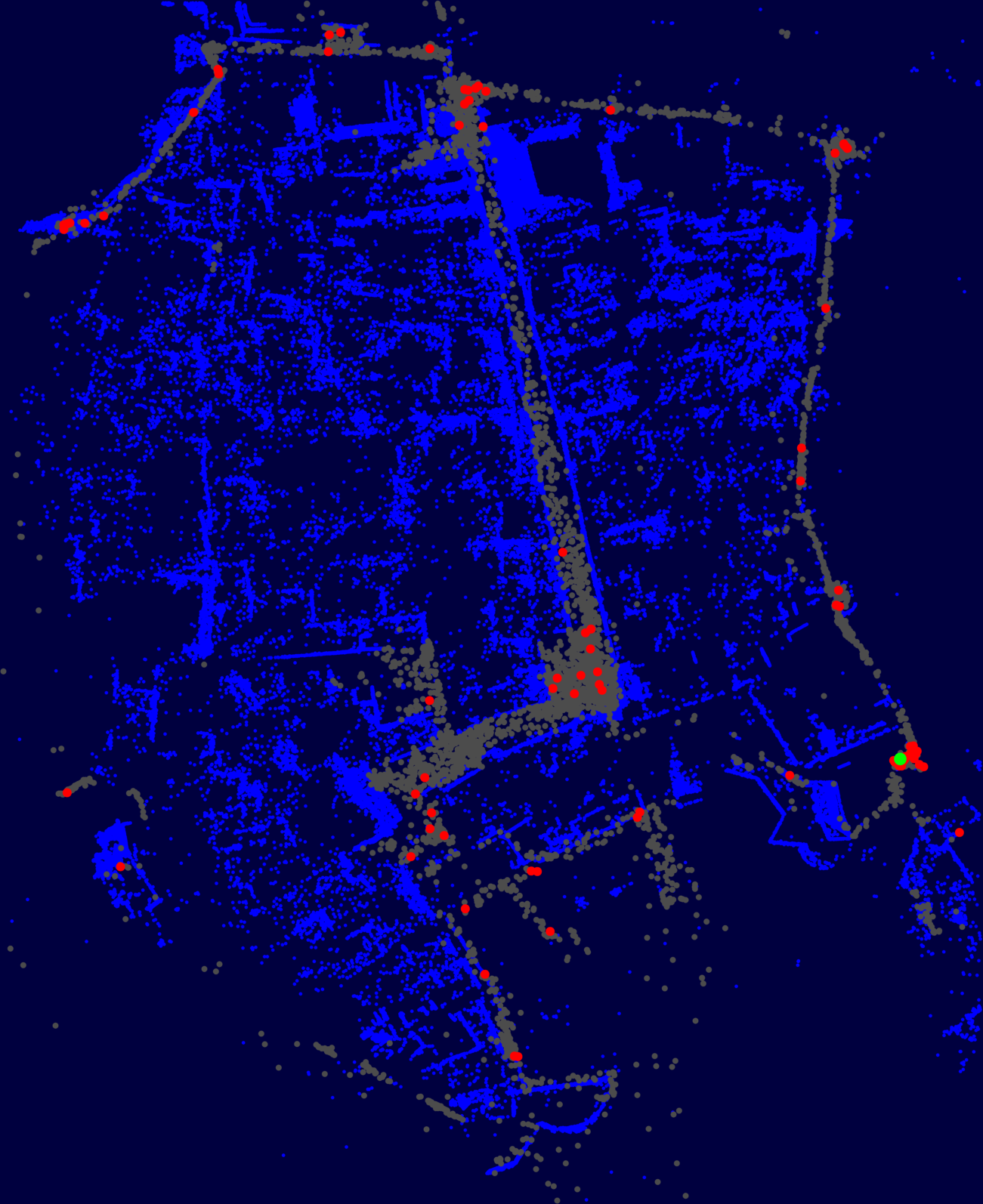}}{30pt}{-5pt} & 
  \includegraphics[width=0.162\paperwidth]{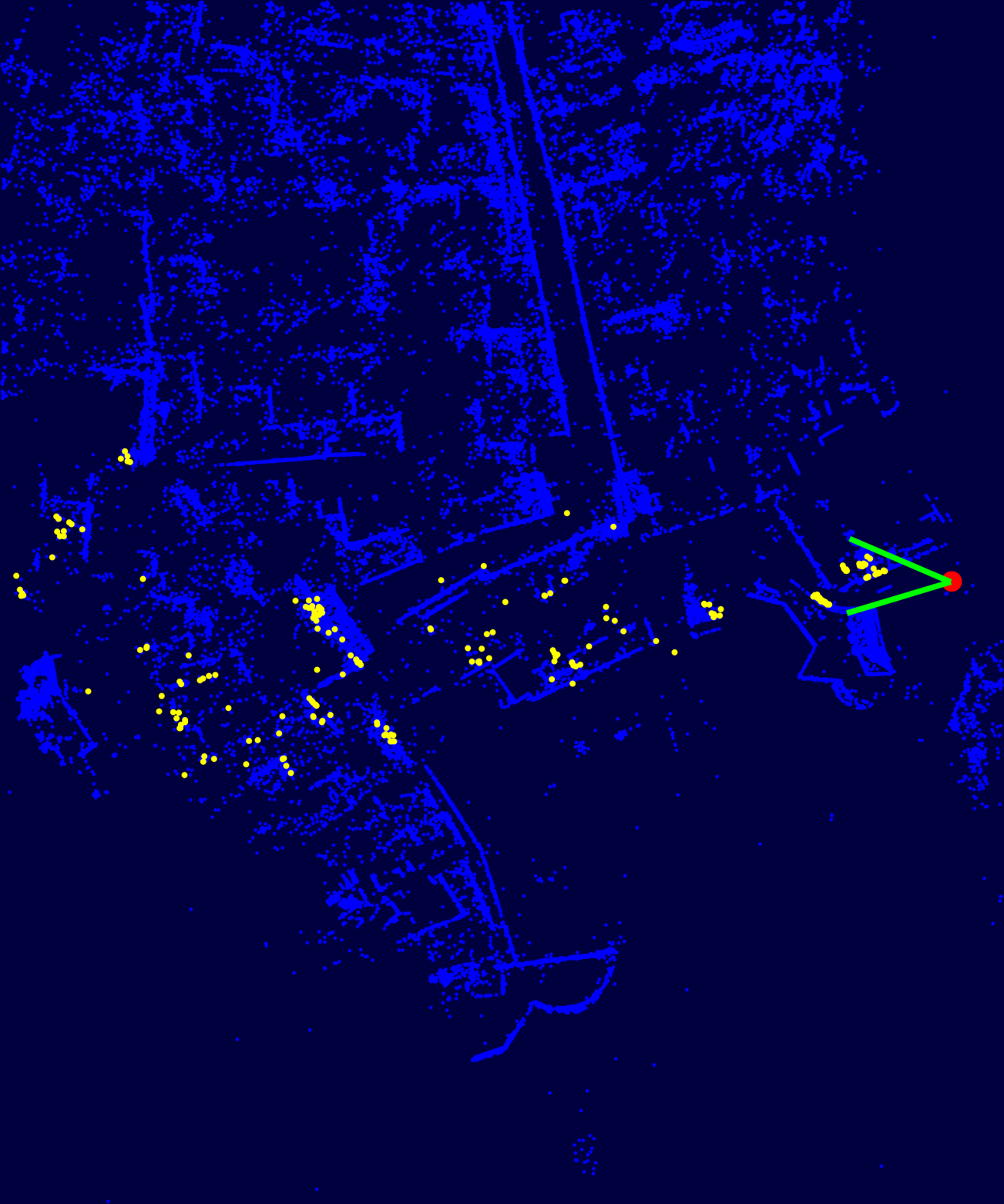} \\

  \multicolumn{2}{c}{\scriptsize Eng-Quad} &
  \multicolumn{2}{c}{\scriptsize Dubrovnik}

  \end{tabular}

  \vspace{.3cm}
  \caption[Qualitative localization results]{Qualitative localization results.
  Left column: model images (gray) are highlighted (red) if they received  
  one vote from Algorithm \ref{alg:fwdrt}. Algorithm \ref{alg:bckrt}
  quickly recognizes model images from the same general area of camera pose
  space (green). Right column: correspondences used by the PnP solver (yellow),
  along the localized camera (green). Ground truth camera position is indicated
  with a red circle. Best viewed zoomed in color.}
  \label{fig:reg}
  \vspace{-0.1in}
\end{figure*}

\paragraph{Anytime performance:} The runtime of our algorithm for camera
localization depends on two parameters: $N_{F}$ and $N_{B}$. Setting these
parameters trades off localization accuracy with faster execution times. Figure
\ref{fig:anytime} shows the influence of these variables using the Eng-Quad
dataset. We benchmarked forward matching times by randomly sampling query
features until a desired number $N_{F}$ pass the global ratio under fixed
values for $N_{B}$. Similarly, we fixed $N_{F}$ and evaluated different values
for $N_{B}$. In both cases, the range of values tested vary from 50 up to 500
matched features. Figure \ref{fig:anytime} shows the number of registered
images under these different configurations, and the time spent to achieve such
a level of performance.

\paragraph{Experimental details:} We tested our localization pipeline using the
following settings: for each dataset, we built a global kd-tree index using
all model view descriptors. We request $k=5$ nearest neighbors and check 128 leaves. We
set $\tau=0.7$ across all of our ratio tests. We set $N_{F}=200$ and
$N_{B}=200$ to provide a good balance between camera localization and 
execution time. Algorithm \ref{alg:bckrt} stops after 20 back-matched images,
which is a generous setting in these datasets (in most cases $N_{B}$ is achieved 
in less than 5 loops). Experiments were performed
using a single thread on an Intel i7-5930 CPU at 3.50GHz. We used the
implementation of \cite{Sattler2011,Sattler2012} in Eng-Quad and the re-bundled
Dubrovnik comparisons, running a single thread on an Intel i7-3770 CPU at 3.40GHz. 
We used a generic vocabulary tree and default parameters: $N_{t}=100$ for 
\cite{Sattler2011} and $N_{3D}=200$ for \cite{Sattler2012}. Unfortunately,
implementations of \cite{zeisl2015camera,svarm2016city} were not available.

\paragraph{Camera Localization:} We successfully localized all images in
Dubrovnik, except one image in the corrected version using P4Pf. We achieved
the smallest localization errors for all quartiles, and reported more images
within the $18.3m$ threshold and fewer beyond the $400m$ mark.  Despite finding
a substantial higher number of inliers, our method yielded larger average
errors with respect to the original Dubrovnik model due to its underlying
defects in the ground-truth.  \cite{svarm2016city} and \cite{zeisl2015camera}
(after RANSAC), who use a shape-voting approximation to the rough image
location rather than the traditional \textit{match-and-RANSAC} pipeline, report
smaller localization errors but at the cost of longer runtimes. Finally, we
successfully localized all query images in the Rome dataset using P4Pf. Rome
also suffers the similar inaccuracies as Dubrovnik, which resulted in the loss of
one test image using P3P.

The benefits of our approach are more pronounced
on Eng-Quad, due to its difficult characteristics. We localized more than 100 and 
50 additional cameras w.r.t. \cite{Sattler2011,Sattler2012} respectively, improving 
all localization errors except the first quartile using P4Pf. We obtain 
faster runtimes than \cite{li2010location,zeisl2015camera} while being competitive
with those of \cite{Sattler2011,Sattler2012}. Notably, our approach adapts
better to the more difficult Eng-Quad dataset, spending more time retrieving
images with sufficient correspondences. On the other hand, we quickly
recognize landmarks in Dubrovnik with the first or second top ranked images,
quickly retrieving sufficient putative correspondence and yielding faster
localization times.

\paragraph{Location Retrieval:} We obtained an asymptotic recall of 66.63\% on
the SF-0 dataset using the protocol of \cite{Li}.  At 95\% precision, the recall 
drops to 52.30\% using the \textit{effective inlier count} of \cite{Sattler2015}, 
falling below performance of other methods \cite{chen2011city, Sattler2015, arandjelovic2014,
zeisl2015camera} for location recognition.  For this test we used less stringent
parameters: $k=7$, $N_{F}=500$, $N_{B}=300$, and back matched up to 50 images. 
We expect tuning these parameters and utilizing re-ranking heuristics exploited by 
other methods to provide a better approach for such location retrieval
problems.  

\section{Conclusion}

Alternatives to large-scale image localization have focused on reducing the
density of the search space to quickly find discriminative correspondences.
Here we have shown that retrieving multiple global nearest neighbors and
filtering them using approximations to the ratio test can quickly identify
candidate regions of pose space. Such regions can be further refined by back
matching to yield state-of-the-art results in camera localization, even for
datasets with challenging global repeated structure. 

{\small
\bibliographystyle{ieee}
\bibliography{thesis}
}

\end{document}